\documentclass{article}

\usepackage{float}  
\usepackage{caption}
 \usepackage[preprint]{neurips_2026}

\usepackage[utf8]{inputenc} 
\usepackage[T1]{fontenc}    
\usepackage{hyperref}       
\usepackage{url}            
\usepackage{booktabs}       
\usepackage{amsfonts}       
\usepackage{nicefrac}       
\usepackage{microtype}      
\usepackage{xcolor}         
\usepackage{wrapfig}  

\title{Reinforcement Learning for Diffusion LLMs with Entropy-Guided Step Selection and \\ Stepwise Advantages}

\author{%
  Vishnu Teja Kunde\thanks{Equal Contribution.}
  \thanks{\texttt{\{vishnukunde,fatemehdoudi,mahdi.farahbakhsh,dileep.kalathil,krn,chmbrlnd\}@tamu.edu}}
  \quad
  Fatemeh Doudi\footnotemark[1] \footnotemark[2] \quad 
  Mahdi Farahbakhsh\footnotemark[2] \And
  Dileep Kalathil\footnotemark[2] \quad
  Krishna Narayanan\footnotemark[2] \quad
  Jean-Francois Chamberland\footnotemark[2] \And \\ 
 \footnotemark[2] Department of Electrical \& Computer Engineering \\
  Texas A\&M University \\
  College Station, TX 77843
}

\usepackage{tikz}
\usepackage[utf8]{inputenc}
\usepackage{amsmath, amssymb, bbm}
\usepackage[]{enumitem}
\usepackage{wrapfig}
\usepackage{subcaption}
\usepackage[dvipsnames]{xcolor}
\usepackage{hyperref}

\usepackage[table]{xcolor}

\newcommand{\abs}[1]{\vert#1\vert}
\newcommand{\norm}[1]{\lVert#1\rVert}

\newcommand{\bE}{\mathbb{E}}

\newcommand{\cV}{\mathcal{V}}

\newcommand{\vm}{\mathbf{m}}

\newcommand{\vq}{\mathbf{q}}

\newcommand{\vx}{\mathbf{x}}

\newcommand{\btheta}{{\boldsymbol{\theta}}}

\newcommand{\pitheta}{{\pi_{\btheta}}}
\newcommand{\pithetaold}{{\pi_{{\btheta}_{\rm old}}}}

\newcommand{\gtheta}{g_{\btheta}}

\usepackage{pgfplots}
\usepackage{xcolor}

\pgfplotsset{compat=1.18}

\definecolor{icmlred}{HTML}{EE4B2B}

\definecolor{icmlblue}{HTML}{031273}
\definecolor{icmlgray}{HTML}{AAAAAA}

\colorlet{icmlblue70}{icmlblue!70!white}  
\colorlet{icmlblue50}{icmlblue!50!white}  
\colorlet{icmlblue30}{icmlblue!30!white}  
\colorlet{icmlblue10}{icmlblue!10!white}

\usepackage{svg}
\usepackage{hyperref}
\usepackage{placeins}

\usepackage[most]{tcolorbox}

\usepackage[capitalize,nameinlink]{cleveref}

\usepackage{algorithm}
\usepackage{algorithmic}

\usepackage{amsmath}
\usepackage{amssymb}
\usepackage{mathtools}
\usepackage{amsthm}

\usepackage{multirow}

\theoremstyle{plain}
\newtheorem{theorem}{Theorem}
\newtheorem{proposition}[theorem]{Proposition}
\newtheorem{lemma}[theorem]{Lemma}

\theoremstyle{definition}

\theoremstyle{remark}
\newtheorem{remark}[theorem]{Remark}

\begin{document}

\maketitle

\begin{abstract}
  
Reinforcement learning (RL) has been effective for post-training autoregressive (AR) language models, but extending these methods to diffusion language models (DLMs) is challenging due to intractable sequence-level likelihoods. Existing approaches therefore rely on surrogate likelihoods or heuristic approximations, which can introduce bias and obscure the sequential structure of denoising. We formulate diffusion-based sequence generation as a finite-horizon Markov decision process over the denoising trajectory and derive the policy gradient that decomposes over denoising steps in terms of stepwise advantages, without requiring explicit evaluation of the sequence likelihood. Grounded in this result, we develop tractable approximations for large-scale training: (i) denoising steps are selected for policy updates via an entropy-guided approximation bound, and (ii) stepwise advantages are estimated using a one-step denoising completion naturally provided by the diffusion model, avoiding costly multi-step rollouts or auxiliary value networks. Experiments demonstrate state-of-the-art results on nearly all benchmarks spanning coding, logical reasoning, and mathematical reasoning, outperforming existing RL post-training approaches for DLMs. Code is available at this \href{...}{ repo}\footnote{\url{https://github.com/vishnutez/egspo-dllm-rl}}.


\end{abstract}

\section{Introduction}
\label{sec:intro}

Diffusion Language Models (DLMs) \citep{sahoo2024simple, nie2025llada, ye2025dream} have recently emerged as a compelling alternative to autoregressive language models (ARLMs). Instead of generating tokens sequentially from left to right, DLMs produce text through an iterative denoising process, typically via masked discrete diffusion, enabling bidirectional context and multi-token parallelism for higher token throughput while maintaining competitive output quality. These advantages have spurred rapid progress in DLM architectures and algorithms, including multimodal generation \citep{yang2025mmada, li2025lavida}, long-context modeling \citep{liu2025longllada}, accelerated inference \citep{wu2025fastdllm}, and code generation \citep{song2025seed}. Motivated by the transformative impact of reinforcement learning (RL) post-training on ARLMs \citep{guo2025deepseek}, there is now growing interest in developing effective and scalable RL algorithms tailored to DLMs.

Despite the success of RL post-training for ARLMs, extending these methods to DLMs is \emph{not} a direct translation. RL for ARLMs relies on causal token-wise factorization, which yields a natural token-space Markov decision process (MDP) and enables efficient computation of log-likelihoods and importance ratios. DLMs fundamentally break this structure: generation proceeds through a denoising trajectory in masked space, and the likelihood of the final output does not admit a readily usable token-wise decomposition. As a result, na\"ively porting standard policy-gradient objectives leads to intractable or prohibitively expensive likelihood evaluations. At the same time, diffusion generation offers opportunities largely absent in ARLMs. Model uncertainty evolves non-uniformly along the denoising trajectory, motivating \emph{stepwise credit/advantage assignment} and \emph{stepwise compute allocation} across diffusion steps. Moreover, masked DLMs output token distributions for all positions at each step, enabling a lightweight ``full unmasking'' that can serve as an increasingly accurate proxy of the eventual output later in the trajectory, and thus provide partial learning signals for intermediate steps without an explicit pretrained value function. Together, these challenges and opportunities motivate a principled approach to RL for DLMs that exploits diffusion structure.

\begin{wrapfigure}{r}{0.59\textwidth}
\vspace{-0.5cm}
\centering
\includegraphics[width=0.98\linewidth]{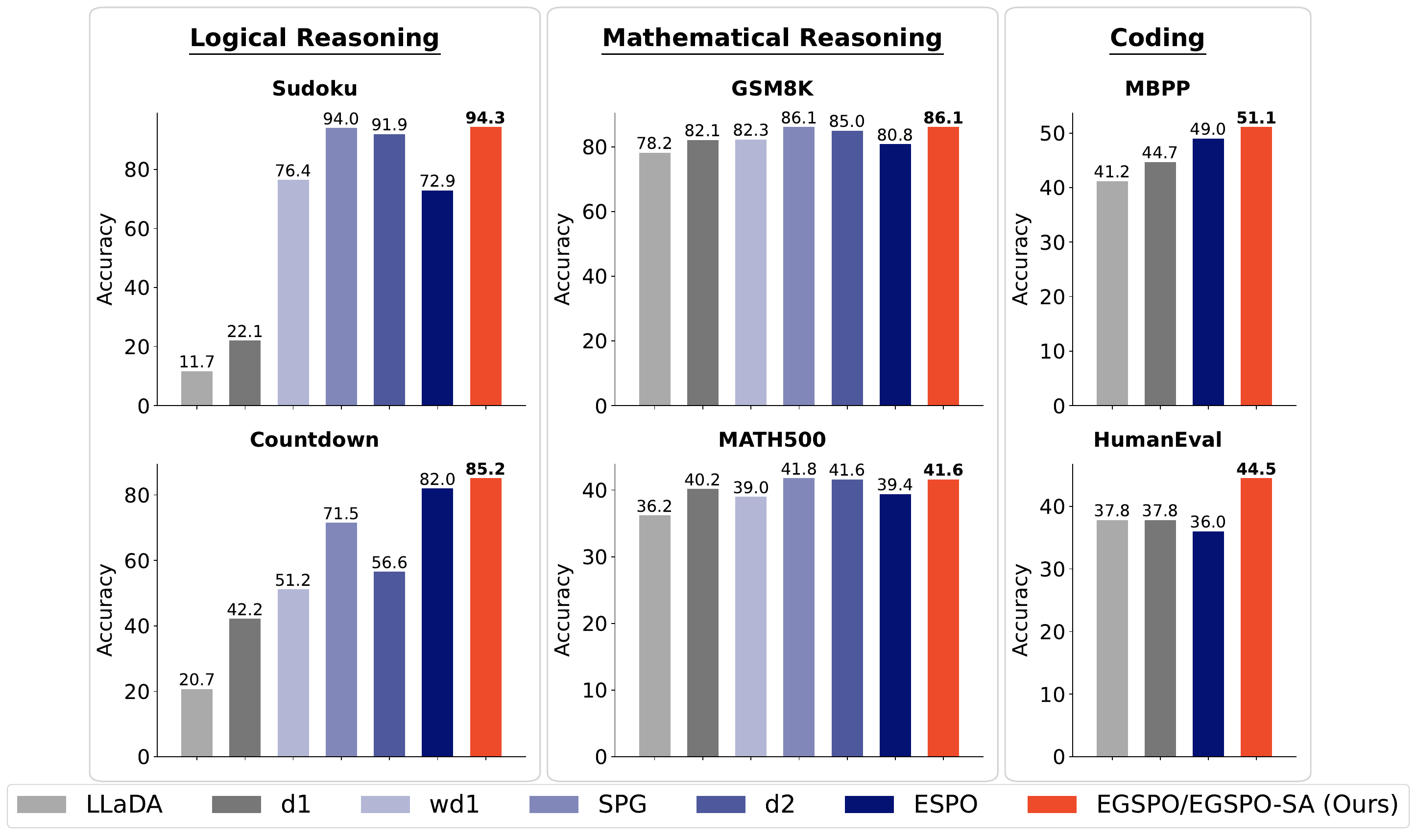}
\caption{Our methods achieve state-of-the-art results on nearly all coding, logical, and mathematical reasoning benchmarks. The red bar reports the better of EGSPO and EGSPO-SA per task (LLaDA-8B-Instruct, 0-shot, 2-token/step); see Tables \ref{tab:main_results}, \ref{tab:coding_results} for the full breakdown.}
\label{fig:intro_results}
\vspace{-0.3cm}
\end{wrapfigure}

A growing body of work has begun exploring RL post-training for DLMs via surrogate objectives and tractable likelihood approximations \citep{zhao2025d1, tang2025wd1, wang2025spg, ou2026principled, wang2025d2, wang2025revolutionizing, zhao2025diffpotrainingdiffusionllms}. While practically effective, these methods typically begin with a \emph{chosen approximation} to the likelihood, objective, or advantage and optimize the resulting surrogate directly, leaving the connection to the true RL objective less explicit and step-level optimization across the denoising trajectory largely implicit (see Section~\ref{sec:related-work} for detailed comparison).

We take a complementary, first-principles approach that makes the diffusion structure explicit, rather than treating a DLM as a black-box sampler. Instead of beginning with a particular surrogate likelihood approximation, we ask the fundamental questions that should underlie any principled RL method: What is the right MDP formalism for DLMs? What is the policy gradient for the true RL objective? How can it be approximated tractably at scale? And how can diffusion-time structure enable \emph{stepwise} advantage (credit) estimation and \emph{compute allocation} across denoising steps? We answer these questions in the affirmative, leading to the following main contributions.

\vspace{-0.2cm}
\begin{itemize}[leftmargin=*, itemsep=0.35em]
\item \textbf{MDP formalism for DLMs:} We formulate masked diffusion generation as a finite-horizon MDP over denoising steps, making the structure needed for RL explicit.

\item \textbf{Policy gradient with stepwise advantage:}
Building on the MDP formalism, we derive the policy-gradient that decomposes over denoising steps, yielding a principled notion of \emph{stepwise advantages}.

\item \textbf{Tractable estimators exploiting diffusion structure:} We turn this theory into a practical algorithm by exploiting two DLM-native capabilities absent in ARLMs. First, we allocate training compute \emph{across denoising steps} using the model’s intrinsic uncertainty: higher-entropy steps are prioritized under a fixed budget. We call this method \textbf{E}ntropy-\textbf{G}uided \textbf{S}tepwise \textbf{P}olicy \textbf{O}ptimization (\textbf{EGSPO}). Second,  we estimate stepwise advantages via a lightweight full-sequence ``one-shot'' completion from intermediate states, yielding intermediate learning signals without an extra value network or costly multi-step rollouts. We call this method  \textbf{E}ntropy-\textbf{G}uided \textbf{S}tepwise \textbf{P}olicy \textbf{O}ptimization with \textbf{S}tepwise \textbf{A}dvantages (\textbf{EGSPO-SA}). Together, these ideas make RL  training scalable for DLMs while preserving the diffusion-time structure. 

\item \textbf{State-of-the-art results:}
We achieve state-of-the-art results on nearly all benchmarks spanning \emph{coding}, \emph{logical reasoning}, and \emph{mathematical reasoning}, outperforming existing RL post-training approaches for DLMs on  LLaDA \citep{nie2025llada}  and Dream \citep{ye2025dream} models.

\end{itemize}

\vspace{-0.5cm}
\section{Related Work}
\label{sec:related-work}
\vspace{-0.2cm}

\textbf{Diffusion Language Models:} DLMs \citep{sahoo2024simple, nie2025llada, ye2025dream} extend discrete diffusion frameworks \citep{austin2021structured, lou2023discrete, song2025seed} to natural language, enabling parallel multi-token generation. Prior work has explored architectural simplifications \citep{shi2024simplified}, theoretical connections between discrete and continuous diffusion \citep{zhao2024improving}, and multimodal extensions \citep{yang2025mmada, li2025lavida}. Inference efficiency has been improved via remasking schemes \citep{wang2025remaskingdiscretediffusionmodels}, token reordering \citep{arriola2025block, sahoo2025esoteric}, and entropy- or confidence-based sampling \citep{benhamu2025acceleratedsamplingmaskeddiffusion, wu2025fastdllm}.


\textbf{RL for Diffusion Image Models:} RL is used for preference alignment in image diffusion models via policy-gradient methods \citep{black2024training, fan2023dpok, uehara2024fine} or preference-based optimization \citep{wallace2024diffusion, yang2024cvpr}. DDPO~\citep{black2024training} similarly casts denoising as an MDP, but derives the policy gradient only in reward form without stepwise value functions or advantages. Furthermore, image diffusion models are orders of magnitude smaller than 7--8B DLMs, so running all $T$ denoising steps per update is affordable there; our entropy-guided step selection addresses the budget constraint that arises specifically at LLM scale. 

\textbf{RL for Diffusion Language Models:} RL post-training for DLMs has grown rapidly. \emph{d1}~\citep{zhao2025d1} adapts GRPO via a mean-field likelihood approximation; \emph{wd1}~\citep{tang2025wd1} stabilizes training with weighted policy updates while retaining similar likelihood proxies; \emph{SPG}~\citep{wang2025spg} optimizes pessimistic/optimistic surrogate objectives derived from likelihood bounds; \emph{ESPO}~\citep{ou2026principled} treats entire sequence generation as a single action and uses the ELBO as a tractable sequence-level likelihood proxy with per-token importance ratio normalization; \emph{TraceRL}~\citep{wang2025revolutionizing} trains a trajectory-aware diffusion value model; and \emph{DiFFPO}~\citep{zhao2025diffpotrainingdiffusionllms} uses surrogate policies with importance sampling corrections. Despite their empirical success, these methods begin with a chosen approximation to the likelihood, objective, or advantage and optimize it directly, leaving step-level credit assignment largely implicit and the connection to the true RL objective opaque. The most closely related work is \emph{d2}~\citep{wang2025d2}, which derives a reward-weighted policy gradient without an explicit MDP, using step-merging estimators that collapse multiple denoising steps into a single update and discard per-step structure; our entropy-guided step selection instead retains the most informative individual steps via Lemma~\ref{lem: error_bound_using_entropy}. \emph{TraceRL}~\citep{wang2025revolutionizing} is conceptually similar to \emph{d2} but evaluates on non-standard model architectures, so we treat \emph{d2} as the canonical trajectory-level baseline. \citet{hong2026improving} train an unmasking-head policy, focusing on the unmasking mechanism rather than finetuning of the model for a specific task.

\vspace{-0.3cm}
\section{Preliminaries and Problem Formulation}
\vspace{-0.1cm}


\subsection{Preliminaries}
\label{sec:prelims}
\vspace{-0.1cm}

\textbf{Notations.} Let $\vx = \vx_{0:T,1:L}$ denote a sequence of $T+1$ steps of $L$-length sentences. Each of $\vx_t$ can take values from a finite set of vocabulary $\cV \cup \{\vm\}$ where $\vm$ is a special token denoting mask. The state $\vx_0$ denotes the clean sentence and can take values only in $\cV$.

\textbf{Masked Diffusion Language Model (MDLM):} MDLMs \citep{sahoo2024simple, nie2025llada, song2025seed, ye2025dream} generate text via iterative unmasking. A clean sequence $\vx_0$ is masked as $\vx_{t,\ell} \sim {\rm Cat}(\cdot \mid \alpha_t \vx_{0,\ell} + (1-\alpha_t)\vm)$ with decreasing $\alpha_t$. The reverse process is parameterized by $f_\btheta$: given $\vx_{t+1}$, denoised samples follow $\vx_{s,\ell} \sim \pi_\btheta^{s\mid t}(\cdot \mid \vx_t) = q^{s\mid t,0}(\cdot \mid \vx_t, \vx_0 = f_\btheta(\cdot \mid \vx_t))$, and iterative generation factorizes as $\pitheta^{s:t-1\mid t}(\vx_{s:t-1}\mid\vx_t) \triangleq \prod_{u=t-1}^{s}\pitheta^{u\mid u+1}(\vx_u\mid\vx_{u+1})$. Let $M_t(\vx_{t+1}) \triangleq \{\ell: \vx_{t+1,\ell}=\vm\}$ denote masked positions at step $t$. Since $f_\btheta$ models a product distribution over masked positions, it induces a one-step distribution $\pitheta^{0\mid t}(\vx_0\mid\vx_t) = \prod_{\ell:\vx_{t,\ell}=\vm} f_{\btheta,\ell}(\vx_{0,\ell}\mid\vx_t)$. We drop time-step superscripts on $\pitheta$ where clear from context.





\textbf{RL for LLMs:} Given a query $\vq\sim\mathcal{D}$, an LM generates an output $\vx_0$ and receives a reward $r(\vx_0,\vq)$. RL fine-tuning seeks to maximize the expected reward: 
\begin{align}
    J(\btheta) &\triangleq \bE_{\vq,\vx_0 \sim \pitheta(\cdot \mid \vq)}[r(\vx_0,\vq)]. \label{eq:ar_rl_objective}
\end{align}
Policy-gradient methods are standard tools for optimizing \eqref{eq:ar_rl_objective}. Among them, the Group Relative Policy Optimization (GRPO) \citep{shao2024deepseekmath} is widely popular for RL with Verifiable Reward (RLVR) tasks, as it eliminates the need for a trained value model used in  PPO-style algorithms \citep{schulman2017ppo}. The GRPO loss function for an  ARLM is given by
\begin{align*}
    L_{\rm AR}(\btheta) = -\bE_{\vq,\vx_0\sim\pithetaold(\cdot \mid \vq)}\!\left[\frac{1}{L} \sum_{\ell=1}^L \min\!\left(\rho_\ell A^{\pithetaold},\, {\rm clip}_{1-\epsilon}^{1+\epsilon}(\rho_\ell)A^{\pithetaold}\right) \right] + \beta D_{\rm KL}(\pitheta \Vert \pi_{\rm ref}),
\end{align*}
where $\rho_\ell = \frac{\pitheta(\vx_{0,\ell} \mid \vx_{0,<\ell},\vq)}{\pithetaold(\vx_{0,\ell} \mid \vx_{0,<\ell},\vq)}$ is the importance ratio, and the group-relative advantage $A^{\pithetaold}(\vx_0^j, \vq) = r(\vx_0^j, \vq) - \frac{1}{G} \sum_{i=1}^G r(\vx_0^i, \vq)$ is estimated from $G$ completions $\vx_0^j \sim \pithetaold(\cdot \mid \vq)$.

\vspace{-0.1em}
\subsection{Problem Formulation: RL for Diffusion LLMs}
\vspace{-0.1cm}

RL finetuning of an AR-LLM exploits its auto-regressive structure by modeling the token generation as a token-space MDP, and the LLM as a policy. Since $\vx_0 = \vx_{0,1:L}$, we can define the  initial state as $\vq$, state at step $\ell$ as $(\vq, \vx_{0,<\ell})$ and action at step $\ell$ as the next token $\vx_{0,\ell} \sim \pi_{\btheta}(\cdot \mid \vx_{0,<\ell}, \vq)$. Any policy gradient algorithm for solving (\ref{eq:ar_rl_objective}) has to calculate the term $\pitheta(\vx_0 \mid \vq)$, and here it decomposes as $\pitheta(\vx_{0} \mid \vq) = \prod_{\ell=1}^L \pitheta(\vx_{0,\ell} \mid \vx_{0,<\ell}, \vq).$ Due to causal attention of AR-LLM, all these terms can be efficiently computed using only a single model forward pass. \textbf{Unfortunately,  DLMs break these conveniences} because there is no token-wise causal factorization of $\pitheta(\vx_0 \mid \vq)$ available. Naively porting the RL training methods for AR-LLM would require evaluating (explicitly or implicitly) sequence-level likelihood terms that are intractable as there is no direct decomposition over tokens. Prior work addresses this with surrogate objectives and likelihood approximations \citep{zhao2025d1,tang2025wd1,wang2025spg,wang2025d2,ou2026principled}, but these approaches necessarily make compromises: the optimized objective may deviate from \cref{eq:ar_rl_objective}, gradients may be biased, and the role of \emph{individual denoising steps} in producing the final reward is often ignored.

In this work, we take a complementary approach. Rather than beginning with a particular surrogate likelihood approximation, we ask the foundational questions that should underlie any principled RL method for diffusion LMs. 
$(i)$ \textit{MDP formalism:} What are the states, actions, and transitions that faithfully represent diffusion-based generation while remaining amenable to RL analysis? 
$(ii)$ \textit{Policy-gradient for \cref{eq:ar_rl_objective}:}  Can we derive an \emph{unbiased} gradient expression that respects the sequential structure of denoising? How can we estimate that gradient tractably?
$(iii)$ \textit{Stepwise advantages:} In AR decoding, actions are tied to token positions; in diffusion decoding, actions are tied to {denoising step}. Can diffusion structure yield improved stepwise advantages that do not have direct AR analogues? We provide positive answers to these questions in  \cref{sec:methodology}.

\vspace{-0.3cm}
\section{Methodology}
\vspace{-0.2cm}
\label{sec:methodology}

\subsection{Diffusion MDP and Policy Gradient}
\label{sec:pg-theorem}

We formalize the unmasking process as a $T$-step MDP indexed by diffusion time $t$ running from $T$ down to $0$. The state at diffusion step $t$ is $(\vx_t, \vq)$, the action is the next denoised state $\vx_{t-1}$, and transitions are deterministic given the policy. The reward is zero at all intermediate steps and equals $r(\vx_0, \vq)$ at the terminal step $t = 0$. The value of a policy $\pi$ at time step $t$ is given by
\begin{align}
\label{eq:value-defn}
    V_t^{\pi}(\vx_t, \vq) = \bE_{\vx_{< t} \sim \pi^{< t \mid t}(\cdot \mid \vx_t, \vq)}[r(\vx_0, \vq)],
\end{align}
where $\vx_{< t} \sim \pi^{< t \mid t}(\cdot \mid \vx_t, \vq)$ denotes that the expectation is with respect to the trajectories that start at $\vx_t$ and generate $\vx_0$ in $t$ steps. The value of a policy $V^\pi$, averaged over the initial state, is then defined as $V^\pi = \bE_{\vq}[ V_T^{\pi}(\vm, \vq)]$. By comparing \cref{eq:ar_rl_objective} and \cref{eq:value-defn}, it is straightforward to see that $J(\btheta) = V^{\pitheta}$, showing that our MDP formalism will indeed solve the finetuning problem.  We now derive the policy gradient for the diffusion MDP.

\begin{proposition}[Policy Gradient]
\label{thm:pg_thm}
    The gradient of the objective $J(\btheta)$ (c.f. \cref{eq:ar_rl_objective}) is given by
    \begin{align}
        &\nabla_{\btheta} J(\btheta) = \bE_{\vq,\vx \sim \pitheta(\cdot \mid \vq)}\!\left[r(\vx_0, \vq) \nabla_{\btheta} \log \pitheta(\vx \mid \vq)\right] \label{eq:pg_reward} \\
        &=~\sum_{t=0}^{T-1}\bE_{\vq,\vx\sim \pitheta}[{A}^{\pitheta}_t(\vx_{t+1},\vx_0,\vq) \nabla_{\btheta} \log \pitheta(\vx_t \mid \vx_{t+1})], \label{eq:pg_adv}
    \end{align}
    where the stepwise advantage is given by
    \begin{align}
        A^{\pitheta}_t(\vx_{t+1}, \vx_0, \vq) \triangleq r(\vx_0, \vq) - V_{t+1}^\pitheta(\vx_{t+1}, \vq).
    \end{align}
\end{proposition}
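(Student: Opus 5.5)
The plan is the standard likelihood-ratio argument, adapted to the finite-horizon diffusion MDP. Transitions are deterministic given the action, so the only $\btheta$-dependent factors in the trajectory law are the policy factors. Concretely, the law of the full trajectory $\vx = \vx_{0:T}$ given $\vq$ is
\begin{align*}
\pitheta(\vx \mid \vq) = \mathbbm{1}\{\vx_T = \vm\}\prod_{t=0}^{T-1}\pitheta(\vx_t \mid \vx_{t+1}, \vq).
\end{align*}
This follows from the factorization $\pitheta^{s:t-1\mid t}$ of Section~\ref{sec:prelims}, with $s=0$ and $t=T$, started from the all-mask state.

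\textbf{First identity \eqref{eq:pg_reward}.} Write $J(\btheta) = \bE_{\vq}\big[\sum_{\vx} \pitheta(\vx\mid\vq)\, r(\vx_0,\vq)\big]$. The joint law of $\vx$ has $\vx_0$ as its marginal at step $0$, and $r$ depends only on $\vx_0$, so this equals \cref{eq:ar_rl_objective}. The sum is over a finite set, since $\vx$ takes values in $(\cV\cup\{\vm\})^{(T+1)L}$. Differentiation therefore passes inside the sum, assuming $\pitheta$ is differentiable in $\btheta$. Applying $\nabla \pitheta = \pitheta \nabla\log\pitheta$ on the support then gives \eqref{eq:pg_reward}.

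\textbf{Second identity \eqref{eq:pg_adv}.} Expanding the log of the product gives $\nabla\log\pitheta(\vx\mid\vq)=\sum_{t=0}^{T-1}\nabla\log\pitheta(\vx_t\mid\vx_{t+1})$. The indicator does not depend on $\btheta$. This yields
\begin{align*}
\nabla J = \sum_{t=0}^{T-1}\bE\big[r(\vx_0,\vq)\,\nabla\log\pitheta(\vx_t\mid\vx_{t+1})\big].
\end{align*}
It remains to show that subtracting the baseline $V_{t+1}^{\pitheta}(\vx_{t+1},\vq)$ leaves each term unchanged. Condition on $(\vq,\vx_{T:t+1})$; the baseline is measurable with respect to this history. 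The conditional expectation of the score is
\begin{align*}
\bE\big[\nabla\log\pitheta(\vx_t\mid\vx_{t+1}) \,\big|\, \vx_{t+1},\vq\big] = \sum_{\vx_t}\nabla\pitheta(\vx_t\mid\vx_{t+1}) = \nabla 1 = 0.
\end{align*}
Here the Markov property makes the conditional law of $\vx_t$ depend only on $\vx_{t+1}$. By the tower property, $\bE[V_{t+1}^{\pitheta}(\vx_{t+1},\vq)\,\nabla\log\pitheta(\vx_t\mid\vx_{t+1})]=0$. Subtracting this from each summand gives \eqref{eq:pg_adv} with $A_t^{\pitheta} = r - V^{\pitheta}_{t+1}$.

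\textbf{Main obstacle.} The only real subtlety is bookkeeping. We must verify that the masked-diffusion reverse kernel $\pitheta^{t\mid t+1}$ is a genuine normalized conditional distribution: it is $q^{t\mid t+1,0}$ with $\vx_0$ replaced by the product distribution $f_\btheta$. This normalization is what makes $\sum_{\vx_t}\nabla\pitheta=0$. We must also check that $\pitheta(\vx\mid\vq)$ is exactly the trajectory law, so that no intractable marginal $\pitheta(\vx_0\mid\vq)$ is ever needed; the argument works with the joint over $\vx_{0:T}$ rather than the marginal.

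Finally, the expectation in \eqref{eq:value-defn} should be read through the same Markov kernels, so that $V_{t+1}$ is a function of $(\vx_{t+1},\vq)$ alone. It is worth remarking that this is not even needed for unbiasedness: any function of $(\vx_{t+1},\vq)$ would serve as a baseline. The value function is the particular choice that defines the stepwise advantage.
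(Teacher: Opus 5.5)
Your proposal is correct and follows essentially the same route as the paper's proof: the log-derivative trick over the finite trajectory space, the factorization of $\pitheta(\vx\mid\vq)$ into per-step kernels, and the observation that the action-independent baseline $V_{t+1}^{\pitheta}(\vx_{t+1},\vq)$ contributes zero because $\sum_{\vx_t}\nabla_{\btheta}\pitheta(\vx_t\mid\vx_{t+1})=\nabla_{\btheta}1=0$. Your extra bookkeeping (the indicator on $\vx_T=\vm$, the explicit tower-property conditioning, and the check that the reverse kernel is normalized) only makes explicit what the paper leaves implicit.
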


\vspace{-1em}
Here $\vx \triangleq \vx_{0:T}$ denotes the full denoising trajectory and $\log \pitheta(\vx \mid \vq) = \sum_{t=0}^{T-1} \log \pitheta(\vx_t \mid \vx_{t+1})$. The proof is given in Appendix~\ref{proof:pg_thm}.

\begin{remark}[Variance reduction via per-step baseline]
Both forms are exact. \cref{eq:pg_reward} broadcasts raw reward uniformly; \cref{eq:pg_adv} subtracts the per-step baseline $V_{t+1}^\pitheta$, concentrating the signal on steps where the policy's expected return deviates from the final outcome and reducing variance. Estimating $V_{t+1}^\pitheta$ tractably is addressed in \cref{sec:EstimatingAdvantages}.
\end{remark}

\begin{remark}
\label{rmk:position_selection}
Let $U_t \triangleq \{\ell: \vx_{t,\ell} \neq \vx_{t+1,\ell}\}$ denote the positions unmasked at step $t$. Since the unmasked tokens are conditionally independent given $\vx_{t+1}$, the log-policy decomposes as $\log \pitheta(\vx_t \mid \vx_{t+1}) = \log \pitheta(U_t \mid \vx_{t+1}) + \sum_{\ell \in U_t} \log f_{\btheta,\ell}(\vx_{t,\ell} \mid \vx_{t+1})$. In practice, $U_t$ is chosen by confidence-based decoding: unmasking the positions with the highest $\max_v f_{\btheta,\ell}(v \mid \vx_{t+1})$. The position-selection term $\nabla_\btheta \log \pitheta(U_t \mid \vx_{t+1})$ is not differentiable in the standard sense, since $U_t$ is determined by an argmax (top-$k$) over confidence scores. We therefore treat $U_t$ as fixed when differentiating, following \citet{wang2025d2}, yielding $\nabla_{\btheta} \log \pitheta(\vx_t \mid \vx_{t+1}) = \sum_{\ell \in U_t} \nabla_{\btheta} \log f_{\btheta,\ell}(\vx_{t,\ell} \mid \vx_{t+1})$ and $\pi(\vx_t \mid \vx_{t+1}) = \prod_{\ell \in U_t} f_{\btheta, \ell}(\vx_{t,\ell} \mid \vx_{t+1})$. This is an empirical design choice. However, the resulting gradient still aligns with the correct objective in the following sense: improving the token log-likelihood at a selected position $\ell$ increases $\max_v f_{\btheta,\ell}(v \mid \vx_{t+1})$, which reinforces that position being selected by confidence-based decoding in future rollouts, so the two components of the gradient are implicitly coupled through training.
\end{remark}

\begin{figure*}[h]
\vspace{-0.4cm}
    \centering
    \begin{subfigure}{0.56\linewidth}
        \centering
        \includegraphics[width=\linewidth]{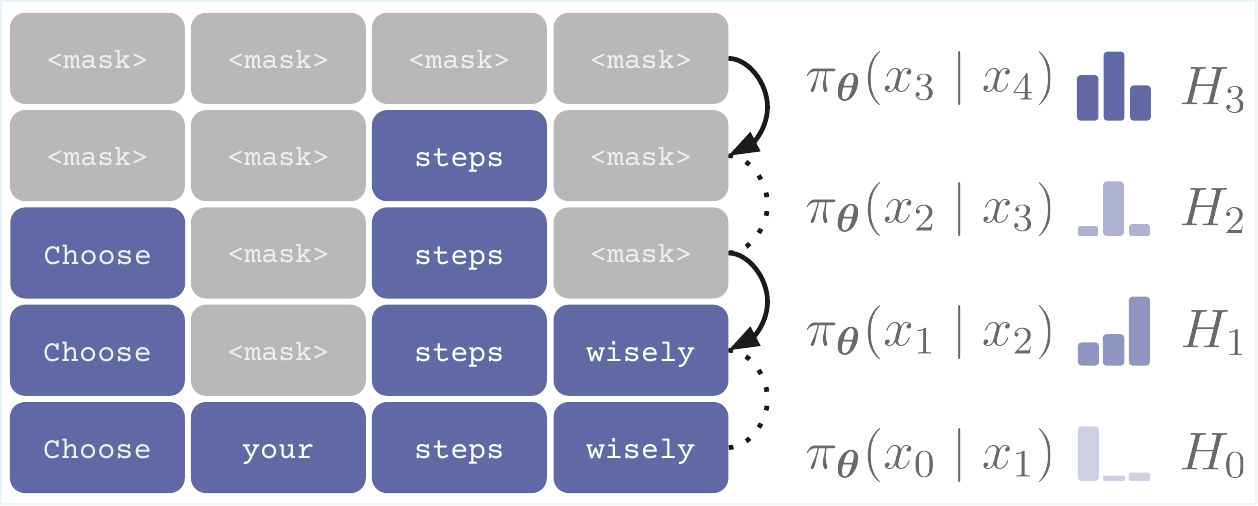}
        \caption{\textbf{Entropy-Guided Step Selection}}
    \end{subfigure}
    \hfill
    \begin{subfigure}{0.42\linewidth}
        \centering
        \includegraphics[width=\linewidth]{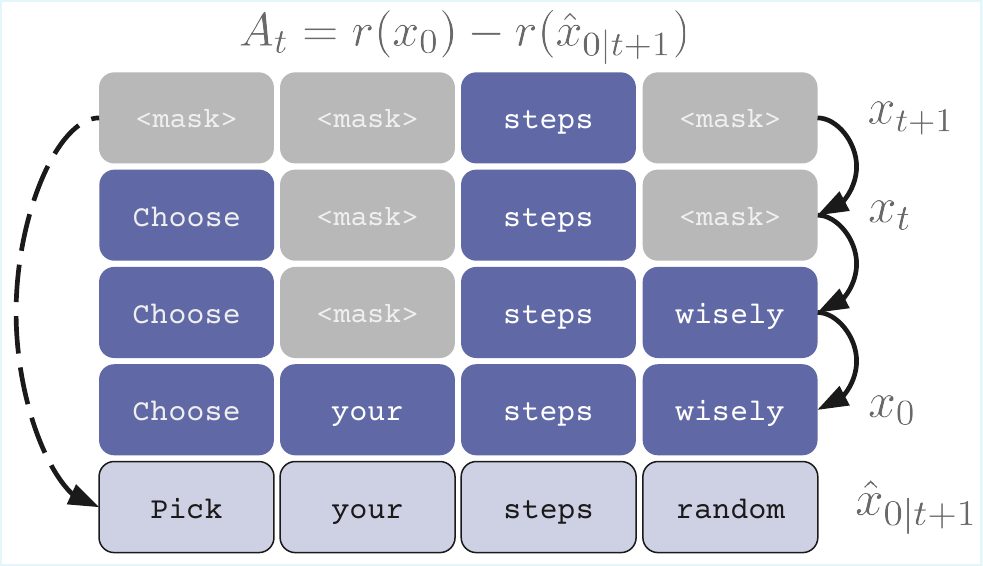}
        \caption{\textbf{Stepwise Advantage Estimation}}
    \end{subfigure} 
    \caption{ (a) \textbf{Entropy-guided step selection}: $K$ steps with highest entropy $H_t$ are selected (solid lines); with $H_3>H_1>H_2>H_0$ and $K{=}2$, steps $3$ and $1$ are chosen.
    (b) \textbf{Stepwise advantage estimation}: Reward of greedy one-step completion $\hat{\vx}_{0\mid t+1}$ from $\vx_{t+1}$ approximates $V_{t}$; $A_t$ measures the additional reward from continuing the full denoising trajectory to $\vx_0$.}
\vspace{-0.4cm}
\end{figure*}

\subsection{Entropy-Guided Step Selection}
\label{sec:entropy_step_selection}


In ARLMs, given a sequence $\vx_0$, the network output at position $\ell$ models the likelihood $\pitheta(\vx_{0, \ell} \mid \vx_{0, <\ell})$ for all $\ell \in [1:L]$ due to the causal attention mask. Thus, a single forward pass gives all the required likelihoods.  This convenience is broken in diffusion LM due to its bidirectional attention, and it can compute only the term  $\pitheta(\vx_t \mid \vx_{t+1})$ in the forward pass of $f_\btheta(\cdot \mid \vx_{t+1})$. The policy gradient in \cref{eq:pg_adv} has $T$ such terms, each requiring a separate forward pass through the network. This is computationally expensive as $T$ is typically of the order of $10^2-10^3$. 

To overcome this challenge, we take the gradient only at a subset $S \subseteq [0:T-1]$ with $|S| \le K$. Writing $\nabla_\btheta J_t(\btheta) = \bE_{\vq,\vx\sim \pitheta}[{A}^{\pitheta}_t \nabla_{\btheta} \log \pitheta(\vx_t \mid \vx_{t+1})]$, so that $\nabla_\btheta J(\btheta) = \sum_{t} \nabla_\btheta J_t(\btheta)$, the subset approximation is $\nabla_\btheta J_S(\btheta) = \sum_{t\in S} \nabla_\btheta J_t(\btheta)$. Two intuitive heuristics are: \textit{(i) Random}: sample $K$ steps uniformly at random; \textit{(ii) Uniform}: take $S = \{kT/K : k \in [0{:}K{-}1]\}$. However, these approaches overlook the structure of the policy at each denoising step along the trajectory. A more principled approach would be to choose $S$ with $\abs{S} \le K$ such that the gradient approximation error $\Delta_S = \norm{\nabla_\btheta J(\btheta)-\nabla_\btheta J_S(\btheta)}$ is minimized. Since evaluating this objective $\Delta_S$ directly is expensive, we instead minimize a surrogate objective. 

\begin{lemma}
\label{lem: error_bound_using_entropy}
    Let $\pitheta^{t\mid t+1}$ be the softmax policy with logits $g_\btheta^{t,i}$ over $i\in [\abs{\cV}]$ for each $t\in [0:T-1]$. Let $\norm{\nabla_\btheta g_\btheta^t} \triangleq \max_{i,j} \norm{\nabla_\btheta (g_\btheta^{t,i} - g_\btheta^{t,j})}$ and assume that $\max_t \norm{\nabla_\btheta g_\btheta^t} \le B$. Let $H(\pitheta^{t\mid t+1})$ be the entropy of $\pitheta(\cdot \mid \vx_{t+1})$.  Defining $\Delta_S = \norm{\nabla_\btheta J(\btheta)-\nabla_\btheta J_S(\btheta)}$, we have
\begin{align}
    \Delta_S \le B (\max_{t\in [T]} \abs{A_t^{\pi_\btheta}} ) \sum_{t \notin S} H(\pitheta^{t\mid t+1}). \label{ineq:delta_s_ub}
\end{align}
\end{lemma}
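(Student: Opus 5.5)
The plan is to bound the omitted terms one step at a time, and then control each per-step score-function norm by the entropy of the corresponding softmax. First, since $\nabla_\btheta J(\btheta)-\nabla_\btheta J_S(\btheta)=\sum_{t\notin S}\nabla_\btheta J_t(\btheta)$ by the decomposition in Proposition~\ref{thm:pg_thm}, the triangle inequality gives $\Delta_S\le\sum_{t\notin S}\norm{\nabla_\btheta J_t(\btheta)}$. Moving the norm inside the expectation (Jensen) and pulling out the advantage gives
\begin{align*}
\norm{\nabla_\btheta J_t(\btheta)}\le \Bigl(\max_{t}\abs{A_t^{\pitheta}}\Bigr)\,\bE\bigl[\norm{\nabla_\btheta\log\pitheta(\vx_t\mid\vx_{t+1})}\bigr].
\end{align*}
It therefore remains to show that the conditional expectation of the score norm, given $\vx_{t+1}$, is at most $B\,H(\pitheta^{t\mid t+1})$.

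Second, I would compute the softmax score explicitly. Write $p_i=\pitheta^{t\mid t+1}(i)$, so that $p_i\propto e^{g_\btheta^{t,i}}$. Then
\begin{align*}
\nabla_\btheta\log p_i=\nabla_\btheta g_\btheta^{t,i}-\sum_j p_j\nabla_\btheta g_\btheta^{t,j}=\sum_{j} p_j\,\nabla_\btheta\bigl(g_\btheta^{t,i}-g_\btheta^{t,j}\bigr).
\end{align*}
The $j=i$ term vanishes. Using the definition of $\norm{\nabla_\btheta g_\btheta^t}$ and the assumption $\norm{\nabla_\btheta g_\btheta^t}\le B$, we get $\norm{\nabla_\btheta\log p_i}\le B\sum_{j\ne i}p_j=B(1-p_i)$. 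Averaging over the action $i\sim p$ conditionally on $\vx_{t+1}$ yields $\bE_i\norm{\nabla_\btheta\log p_i}\le B\bigl(1-\sum_i p_i^2\bigr)$.

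Third, I would compare this collision quantity with the Shannon entropy. Since $1-u\le-\log u$ for $u>0$, we have $1-\sum_i p_i^2\le-\log\sum_i p_i^2$. By Jensen's inequality applied to the concave function $\log$, $-\log\bE_{i\sim p}[p_i]\le\bE_{i\sim p}[-\log p_i]=H(p)$. Hence $1-\sum_ip_i^2\le H(\pitheta^{t\mid t+1})$. Summing over $t\notin S$ then gives \eqref{ineq:delta_s_ub}.

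The main obstacle I expect is that the advantage $A_t^{\pitheta}$ depends on $\vx_0$ and not only on the action $\vx_t$. The conditional averaging over $i$ in the second step is therefore legitimate only after $\abs{A_t}$ has been replaced by its maximum, which is why the bound carries $\max_t\abs{A_t^{\pitheta}}$. A related subtlety concerns $H(\pitheta^{t\mid t+1})$: it depends on $\vx_{t+1}$, so the bound should be read either pathwise at the sampled state or with $H$ understood in expectation over states. I would state this interpretation explicitly. A second subtlety is that the action at step $t$ unmasks several positions through a product of softmaxes (Remark~\ref{rmk:position_selection}). For that case I would treat $\pitheta^{t\mid t+1}$ as a single softmax over the joint action space, as the lemma's hypothesis does; the logits of the product are sums of per-position logits, so the same argument applies with $B$ defined on those joint logits.
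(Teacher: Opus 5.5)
Your proof is correct and follows the paper's argument. Both use the triangle inequality over the omitted steps, pull out $\max_t\abs{A_t^{\pitheta}}$, write the softmax score as $\sum_{j\neq i}\pi_j\nabla_\btheta(g^{t,i}_\btheta-g^{t,j}_\btheta)$ to get $\norm{\nabla_\btheta\log\pi_i}\le B(1-\pi_i)$, and then convert to entropy. The only difference is in that last conversion: the paper applies $1-\pi_i\le-\log\pi_i$ pointwise before averaging, which gives $\bE_i[1-\pi_i]\le H$ at once, whereas you average first and need the extra Jensen step through $-\log\sum_i p_i^2$. Your caveats also hold up: $H$ depends on $\vx_{t+1}$, $A_t$ depends on $\vx_0$ (the paper's Proposition~\ref{prop:entropy_bound} takes $A_i$ constant), and a multi-token action is a single softmax over joint configurations. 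The paper's proof glosses over all three, and you resolve each correctly.
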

Proof is given in \cref{app:theoretical_results}.

\begin{remark}
The factor $\max_t |A_t^{\pi_\btheta}|$ in \cref{ineq:delta_s_ub} is bounded by $r_{\max} - r_{\min}$, the range of the reward function. In practice, rewards are bounded in a known interval $[-M, M]$.
\end{remark}



Using the upper-bound in \cref{ineq:delta_s_ub} as the surrogate objective for $\Delta_S$,  we solve the following optimization problem to select time steps $S$ such that 
\begin{align}
    \label{eq:step-selection-opt}
    \min_{S \subseteq [0:T-1]} \sum_{t \notin S} H(\pitheta^{t\mid t+1}), \quad \abs{S} \le K.
\end{align}
The solution of \cref{eq:step-selection-opt} is given by the greedy choice:
\begin{align}
    \label{eq:step-selection-soln}
    S^* = {\rm top}[K](t: H(\pitheta^{t\mid t+1}))
\end{align}
Intuitively, at these steps the model is least confident about which tokens to unmask, so allocating gradient computation to them directly minimizes the gradient approximation error $\Delta_S$.

\subsection{Estimating Stepwise Advantages}
\vspace{-0.2cm}
\label{sec:EstimatingAdvantages}

To compute the stepwise advantage $A_t^\pitheta$, we need to evaluate the value function $V_t^{\pitheta}$ for each $t$ (c.f. \cref{eq:value-defn}). Evaluating it using Monte Carlo sampling will require generating multiple trajectories that start at $\vx_t$ and reach $\vx_0$ in $t$ steps, which requires a large number of additional forward passes through the model. To overcome this difficulty, we leverage a structure unique to DLMs: $f_\btheta$ directly models a distribution over clean tokens $\vx_0$ from any masked state $\vx_t$ via the one-step denoising distribution $\pitheta^{0\mid t}$ (c.f. \cref{sec:prelims}). \textit{This is a capability ARLMs do not possess without a separate value network.} We use it to propose:
\begin{align}
    \label{eq:value_approx}
    \hat{V}^{\pitheta}_t(\vx_t, \vq) &= \bE_{\vx_0 \sim \pitheta^{0 \mid t}(\cdot \mid \vx_t, \vq)}[r(\vx_0, \vq)].
\end{align}
This expectation is further approximated by the reward evaluated for a single greedy completion $\hat{\vx}_{0\mid t}$, \textbf{available at no extra cost} from the forward pass already computed during rollout:
\begin{align}
    \hat{\vx}_{0, \ell}(\vx_{t}) =
    \begin{cases}
    \arg\max_{v} f_{{\btheta}, \ell}(v \mid \vx_{t}),&\vx_{t, \ell} = \vm \\
    \vx_{t, \ell},&{\rm otherwise.}
    \end{cases}
\label{eq:greedy_completion}
\end{align}

The approximation $\hat{V}_{t+1}^\pitheta$ is biased: the one-step surrogate $\pitheta^{0\mid t+1}$ diverges from the full $t$-step rollout when many tokens remain masked (large $t$), with the bias diminishing as $t \to 0$. Using $\hat{V}_{t+1}^\pitheta$, we define the biased stepwise advantage estimate:
\begin{align}
    \hat{A}_t^\pitheta(\vx_{t+1}, \vx_0, \vq) \triangleq r(\vx_0, \vq) - \hat{V}_{t+1}^\pitheta(\vx_{t+1}, \vq) \approx r(\vx_0, \vq) - r(\hat{\vx}_{0\mid t+1}, \vq). \label{eq:approx_adv}
\end{align}
This mirrors the true stepwise advantage $A_t^\pitheta = r - V_{t+1}^\pitheta$ from Proposition~\ref{thm:pg_thm}, but because $\hat{V}_{t+1}^\pitheta \neq V_{t+1}^\pitheta$ in general, directly substituting $\hat{A}_t^\pitheta$ into the advantage form~\cref{eq:pg_adv} would bias the gradient. The reward form policy gradient given in \cref{eq:pg_reward} provides a natural remedy: it shows that $r(\vx_0,\vq)$ broadcast uniformly to every step already gives an \emph{unbiased} estimate of $\nabla J(\btheta)$, which is precisely the anchor that standard GRPO exploits. We therefore use
\begin{align}
    r(\vx_0,\vq) + \lambda\,\hat{A}_t^\pitheta(\vx_{t+1},\vx_0,\vq) \label{eq:effective_coeff}
\end{align}
as the effective per-step coefficient, for a scalar $\lambda \ge 0$. When the true $A_t^\pitheta$ is used, both forms of Proposition~\ref{thm:pg_thm} yield $\sum_{t=0}^{T-1}\bE\bigl[(r(\vx_0,\vq) + \lambda A_t^\pitheta)\,\nabla_{\btheta}\log\pitheta(\vx_t \mid \vx_{t+1})\bigr] = (1+\lambda)\,\nabla_{\btheta} J(\btheta)$, confirming that this coefficient preserves the gradient direction; the scalar $(1+\lambda)$ is absorbed by the learning rate. The coefficient \cref{eq:effective_coeff} thus governs a bias-variance tradeoff: at $\lambda = 0$ the gradient is unbiased but high-variance; increasing $\lambda$ blends in $\hat{A}_t^\pitheta$, which reduces variance by providing a step-specific signal at the cost of bias from $\hat{V}_{t+1}^\pitheta \neq V_{t+1}^\pitheta$. 

\textit{Intuitively}, $\hat{A}_t^\pitheta$ measures how much better the full trajectory rollout from $\vx_{t+1}$ performed relative to a one-step greedy completion: positive means iterative denoising from $\vx_{t+1}$ was crucial; near-zero means the one-step prediction already captures the outcome. This per-step signal is obtained at no extra cost from the rollout forward pass, with no auxiliary value network required. \textit{To the best of our knowledge, we are the first to derive stepwise advantages from this DLM-specific structure in a principled way for RL fine-tuning.}

\vspace{-0.2cm}
\subsection{From Policy Gradient to GRPO Loss}
\vspace{-0.2cm}

For $G$ sampled completions $\{\vx^j\} \sim \pithetaold(\cdot \mid \vq)$ and each selected step $t \in S$, the effective per-step coefficient from \cref{eq:effective_coeff} is instantiated via \cref{eq:greedy_completion,eq:approx_adv} as $\tilde{A}_t^j = r(\vx_0^j,\vq) + \lambda\,\hat{A}_t^{\pithetaold}(\vx^j,\vq)$. We apply GRPO-style centering across completions for each selected step, $\hat{\bar{A}}_t^j = \tilde{A}_t^j - \tfrac{1}{G}\sum_{i=1}^G \tilde{A}_t^i$. The training loss is a sum over selected steps, with each term applying PPO-style clipping to its own importance ratio $\rho_t = \pitheta(\vx_t \mid \vx_{t+1})/\pithetaold(\vx_t \mid \vx_{t+1})$ plus a KL penalty, consistent with the sum-over-$t$ structure of the policy gradient in \cref{eq:pg_adv}.

We refer to this method as \emph{Entropy-Guided Stepwise Policy Optimization with Stepwise Advantages} (\textbf{EGSPO-SA}). Setting $\lambda = 0$ recovers entropy-guided step selection with the sequence-level reward alone, denoted \textbf{EGSPO}.

\begin{remark}[$\lambda$ across training]
In all experiments, $\lambda$ is fixed within each diffusion episode and does not depend on the denoising step $t$. Separately, $\lambda$ may be annealed across training steps for some tasks, analogous to standard hyperparameter scheduling; per-task values are in Appendix~\ref{sec:hypsetting}.
\end{remark}
\section{Experimental Results}
\label{sec:experiments}

\vspace{-0.2cm}
\subsection{Experimental Setup}
\vspace{-0.2cm}
\textbf{Base Models.} All experiments use two DLMs with distinct designs:
LLaDA-8B-Instruct~\citep{nie2025llada}, trained from scratch with
block-wise inference, and Dream-7B-Instruct~\citep{ye2025dream}, adapted
from a Qwen2.5 ARLM via continued pretraining with non-block-wise sampling. Evaluating on
both ensures our method generalizes across diverse diffusion LM
architectures. Fine-tuning is performed without SFT on either backbone.
Details are in Appendix~\ref{sec:hypsetting}.

\textbf{Datasets.} We evaluate EGSPO and EGSPO-SA on a range of benchmark tasks, including mathematical reasoning (GSM8K \citep{cobbe2021training}, MATH500 \citep{lightman2023let}), logical reasoning  (Sudoku and Countdown \citep{ye2024beyond}), and coding (MBPP \citep{austin2021program}, HumanEval \citep{chen2021evaluating}). Across all benchmarks, we adopt the same terminal reward functions as used in prior work; detailed reward definitions are provided in Appendix~\ref{sec:dataset&reward}.


\textbf{Baselines.} We evaluate our algorithms against existing RL post-training methods for DLMs on both base models. The training and evaluation protocols of these baselines differ from ours in several aspects, which we detail below.

\textit{LLaDA experiments:} For mathematical and logical reasoning
tasks, we compare against d1~\citep{zhao2025d1}, d2~\citep{wang2025d2},
wd1~\citep{tang2025wd1}, SPG~\citep{wang2025spg}, and
ESPO~\citep{ou2026principled}. For coding tasks, only d1 and ESPO report
these results. In the tables, we report the numbers as published in the d1, d2, wd1, and SPG papers, all of which share our training protocol: a generation length of 256, two-token decoding per denoising step, and LoRA with the same rank. This is the standard training setup in the RL for DLM literature. For ESPO, we evaluate results by ourselves using their publicly released checkpoint under this unified protocol, since their reported
results use 1 token/step decoding for GSM8K, MATH500, and coding, and
1-shot evaluation for Sudoku. 1 token/step decoding bypasses the joint
probability learning that makes multi-token diffusion generation promising and 
non-trivial, and thus less appealing for pure diffusion-style language modeling. Full details on training and evaluation protocol differences
across all baselines are in Appendix~\ref{sec:baselines}.

\textit{Dream experiments:} We only compare against ESPO, as no other RL post-training baselines for  DLMs report results on this model. Since ESPO was trained and evaluated with 1-token/step decoding for GSM8K and MATH500, we also report our results with 1-token/step evaluation, even though our training used 2-token/step decoding.

\begin{table*}
\caption{Results on reasoning benchmarks for
\textbf{LLaDA-8B-Instruct.} We report accuracy at different
generation lengths and the best performance across them. d2 reports only the best performance; wd1 does not
report evaluation at length 128. For Sudoku, $^\dagger$ denotes $3$-shot
evaluation, whereas all other entries are $0$-shot. ESPO results are
re-evaluated from the publicly released ESPO checkpoints under the uniform
protocol shared by all other entries in the table, since ESPO's original
protocol uses 1-shot evaluation on Sudoku and decodes 1 token/step on GSM8K and MATH500; their originally published numbers
under that protocol are shown in \textcolor{gray}{gray}  for completeness.
}
\vspace{-0.1cm}
\label{tab:main_results}
\centering
\resizebox{\textwidth}{!}{
\begin{tabular}{lcccccccccccccccc}
\toprule
& \multicolumn{4}{c}{\textbf{Sudoku}}
& \multicolumn{4}{c}{\textbf{Countdown}}
& \multicolumn{4}{c}{\textbf{GSM8K}} 
& \multicolumn{4}{c}{\textbf{MATH500}} 
 \\
\cmidrule(lr){2-5}\cmidrule(lr){6-9}\cmidrule(lr){10-13}\cmidrule(lr){14-17}
\textbf{Model / Seq. Len.} 
& 128 & 256 & 512 & Best
& 128 & 256 & 512 & Best
& 128 & 256 & 512 & Best
& 128 & 256 & 512 & Best \\
\midrule

LLaDA-8B-Instruct
& 11.7 & 6.7 & 5.5 & 11.7
& 20.7 & 19.5 & 16.0 & 20.7
& 68.7 & 76.7 & 78.2 & 78.2
& 26.0 & 32.4 & 36.2 & 36.2 \\

d1
& 22.1 & 16.7 & 9.5 & 22.1
& 34.8 & 32.0 & 42.2 & 42.2
& 73.2 & 81.1 & 82.1 & 82.1
& \underline{33.8} & {38.6} & {40.2} & 40.2 \\

wd1
& - & 76.4 & 62.8 & 76.4
& - & 51.2 & 46.1 & 51.2
& - & 80.8 & 82.3 & 82.3
& - & 34.4 & 39.0 & 39.0
 \\

SPG
& 82.9{$^\dagger$} & 94.0{$^\dagger$} & 93.1{$^\dagger$} & 94.0{$^\dagger$}
& 68.8 & 71.5 &  70.3 & 71.5 
& \underline{78.5} & \textbf{86.1}  & 84.5 & \textbf{86.1}
& {33.4} & \underline{40.0} &  \textbf{41.8} & \textbf{41.8}
  \\

ESPO
& 72.9 & 58.6 & 54.5 & 72.9
& 81.6 & 82.0 & 79.3 & 82.0
& 76.7 & 79.9 & 80.8 & 80.8
& 32.0 & 34.6 & 39.4 & 39.4 \\

\textcolor{gray}{ESPO}
& \textcolor{gray}{92.7} & \textcolor{gray}{84.7} & \textcolor{gray}{80.5} & \textcolor{gray}{92.7}
& \textcolor{gray}{81.6} & \textcolor{gray}{82.0} & \textcolor{gray}{79.3} & \textcolor{gray}{82.0}
& \textcolor{gray}{80.0} & \textcolor{gray}{82.3} & \textcolor{gray}{83.7} & \textcolor{gray}{83.7}
& \textcolor{gray}{36.0} & \textcolor{gray}{39.0} & \textcolor{gray}{43.4} & \textcolor{gray}{43.3} \\

d2
&  &  &  & 91.9
&  &  &  & 56.6
&  &  &  & 85.0
&  &  &  & \underline{41.6} \\

\midrule
EGSPO (Ours)
& \underline{93.3} & \underline{93.6} & \underline{89.1} & \underline{93.6}
& \underline{82.0} & \underline{81.6} & \underline{82.4} & \underline{82.4}
& \textbf{80.7} & \underline{86.0} & \textbf{86.1} & \textbf{86.1}
& 32.2 & 37.8 & 39.0 & 39.0 \\

EGSPO-SA (Ours)
& \textbf{93.7} & \textbf{94.3} & \textbf{93.4} & \textbf{94.3}
& \textbf{84.0} & \textbf{85.2} & \textbf{83.2}  & \textbf{85.2}
& 77.0 & 84.6 & \underline{85.0} & \underline{85.0}
& \textbf{35.0} & \textbf{40.4} & \underline{41.6} &\underline{41.6} \\

\bottomrule
\end{tabular}
}
\vspace{-0.3cm}
\end{table*}

\begin{table*}
\caption{ Results on reasoning benchmarks for \textbf{Dream-7B-Instruct}.
We report accuracy at different generation lengths and the
best performance across them. For GSM8K and MATH500, all models are evaluated using 1 token/step decoding to match ESPO's protocol; ESPO is also trained with
1 token/step decoding, whereas our method is trained with 2 token/step decoding. For Sudoku and Countdown, both training and evaluation use 2 token/step decoding for all methods.}
\vspace{-0.25cm}

\label{tab:dream_results}
\centering
\resizebox{\textwidth}{!}{%
\begin{tabular}{lcccccccccccccccc}
\toprule
\multirow{2}{*}{\textbf{Model / Seq Len}} & \multicolumn{4}{c}{\textbf{Sudoku}} & \multicolumn{4}{c}{\textbf{Countdown}} & \multicolumn{4}{c}{\textbf{GSM8K}} & \multicolumn{4}{c}{\textbf{MATH500}} \\
\cmidrule(lr){2-5} \cmidrule(lr){6-9} \cmidrule(lr){10-13} \cmidrule(lr){14-17}
 & 128 & 256 & 512 & Best & 128 & 256 & 512 & Best & 128 & 256 & 512 & Best & 128 & 256 & 512 & Best \\
\midrule
\textbf{Dream}    & 9.3  & 2.1  & 14.0 & 14.0  & 8.5  & 7.8  & 17.4 & 17.4  & 75.8 & 81.3 & 80.7 & 81.3  & 38.2 & 45.7 & 48.0 & 48.0 \\
 ESPO            & {71.7} & {72.3} & {71.3} & {72.3} & {68.8} & {66.8} & {64.8} & {68.8} & \textbf{79.6} & \underline{82.3} & 82.0 & \underline{82.3} & 40.3 & \underline{47.4} & \underline{50.3} & \underline{50.3} \\
\midrule
 EGSPO (ours)    & \underline{86.4} & \textbf{90.3} & \underline{87.0} & \underline{90.3} & \underline{76.6} & \underline{73.4} & \underline{72.3} & \underline{76.6} & 77.4 & 82.1 & \underline{82.3} & \underline{82.3} & \textbf{42.4} & \textbf{49.2} & \textbf{50.4} & \textbf{50.4} \\
 EGSPO-SA (ours) & \textbf{90.6} & \textbf{90.3} & \textbf{89.2} & \textbf{90.6} & \textbf{79.7} & \textbf{82.4} & \textbf{78.5} & \textbf{82.4} & \underline{77.6} & \textbf{84.9} & \textbf{86.1} & \textbf{86.1} & \underline{41.2} & 47.2 & 47.8 & 47.8 \\
\bottomrule
\end{tabular}%
}
\vspace{-0.25cm}
\end{table*}

\vspace{-1em}
\subsection{Main Results}
\label{sec:main_results}

Tables~\ref{tab:main_results}, \ref{tab:dream_results}, and~\ref{tab:coding_results} report performance across all tasks and generation lengths on both models. Training curves are provided in
Appendix~\ref{app:training_dynamics}, and qualitative examples for
each dataset are shown in Appendix~\ref{app:Qualitative}.

\textbf{Logical reasoning:} EGSPO and EGSPO-SA achieve the largest gains on Sudoku and Countdown, consistently across both backbones. EGSPO-SA outperforms EGSPO  across all generation lengths on both tasks, with the gap most pronounced on Sudoku. This pattern is consistent with the bias-variance tradeoff described in \cref{sec:EstimatingAdvantages}: logical reasoning tasks impose strict global constraints over the denoising trajectory, so per-step credit assignment is genuinely informative and the stepwise advantage estimate provides a useful training signal even under its approximation bias. On LLaDA, EGSPO-SA achieves 94.3\% on Sudoku (vs.\ 94.0\% for SPG under the easier 3-shot setting, and 72.9\% for ESPO) and 85.2\% on Countdown (vs.\ 82.0\% for ESPO and 71.5\% for SPG), both under 0-shot evaluation. On Dream, EGSPO-SA reaches 90.6\% on Sudoku and 82.4\% on Countdown, surpassing ESPO by 18 and 14 percentage points respectively.

\begin{wraptable}{r}{0.55\textwidth}
\vspace{-0.25cm}
\caption{Results on coding benchmarks for \textbf{LLaDA-8B-Instruct.}
We report accuracy at different generation lengths and the best performance across them. ESPO uses full fine-tuning, whereas d1, EGSPO, and EGSPO-SA use LoRA-based fine-tuning. ESPO results are re-evaluated from the publicly released ESPO checkpoints under the uniform protocol shared by all other entries in the table, since ESPO's original protocol decodes 1 token/step on coding tasks; their originally published numbers under that protocol are shown in \textcolor{gray}{gray} for completeness.}
\vspace{-0.25cm}
\label{tab:coding_results}
\centering
\resizebox{\linewidth}{!}{%
\begin{tabular}{lcccccccc}
\toprule
& \multicolumn{4}{c}{\textbf{HumanEval}}
& \multicolumn{4}{c}{\textbf{MBPP}} \\
\cmidrule(lr){2-5}\cmidrule(lr){6-9}
\textbf{Model}
& 128 & 256 & 512 & Best
& 128 & 256 & 512 & Best \\
\midrule
LLaDA & 27.4 & 35.3 & 37.8 & 37.8 & 36.2 & 41.2 & 40.4 & 41.2 \\
d1    & 31.1 & 32.9 & 37.8 & 37.8 & 40.5 & 44.7 & 42.8 & 44.7 \\
ESPO  & 25.0 & 31.7 & 36.0 & 36.0 & 49.0 & 48.6 & 49.0 & 49.0 \\
\textcolor{gray}{ESPO} & \textcolor{gray}{28.1} & \textcolor{gray}{42.1} & \textcolor{gray}{50.0} & \textcolor{gray}{50.0} & \textcolor{gray}{47.4} & \textcolor{gray}{44.6} & \textcolor{gray}{44.2} & \textcolor{gray}{47.4} \\
\midrule
EGSPO & \textbf{32.3} & \underline{40.2} & \underline{39.6} & \underline{40.2} & \underline{49.9} & \textbf{50.6} & \textbf{50.3} & \underline{50.6} \\
EGSPO-SA & \textbf{32.3} & \textbf{41.5} & \textbf{44.5} & \textbf{44.5} & \textbf{51.1} & \underline{48.7} & \underline{49.2} & \textbf{51.1} \\
\bottomrule
\end{tabular}%
}
\vspace{-0.5em}
\end{wraptable}

\textbf{Mathematical reasoning:} On GSM8K and MATH500, our algorithms  match or exceed existing approaches. On LLaDA, EGSPO achieves 86.1\% on GSM8K, matching SPG (86.1\%), and EGSPO-SA reaches 41.6\% on MATH500, matching d2 and within 0.2\% of SPG (41.8\%). The gap between EGSPO and EGSPO-SA is small on both tasks, which is consistent with the observation from \cref{sec:EstimatingAdvantages} that the sequence-level reward already provides a sufficient learning signal when task structure does not tightly couple denoising steps. On a few GSM8K configurations, EGSPO slightly outperforms EGSPO-SA, consistent with the bias-variance tradeoff in \cref{sec:EstimatingAdvantages}: when the sequence-level signal alone is already low-variance, adding the biased one-step value approximation $\hat{V}_{t+1}^\pitheta$ can mildly degrade performance. On Dream, EGSPO-SA achieves 86.1\% on GSM8K, outperforming ESPO (82.3\%).


\textbf{Coding:} Table~\ref{tab:coding_results} show results on HumanEval and MBPP for LLaDA-8B-Instruct. Both EGSPO and EGSPO-SA substantially outperform d1 and ESPO under the matched 2 token/step decoding protocol. EGSPO-SA achieves 51.1\% on MBPP (vs.\ 44.7\% for d1 and 49.0\% for ESPO) and 44.5\% on HumanEval (vs.\ 37.8\% for d1 and 36.0\% for ESPO), demonstrating that entropy-guided step selection is effective for program synthesis, where token-level uncertainty varies substantially across the denoising trajectory.

\begin{figure*}[t]
    \centering
    \begin{subfigure}[b]{0.32\textwidth}
        \includegraphics[width=\textwidth]{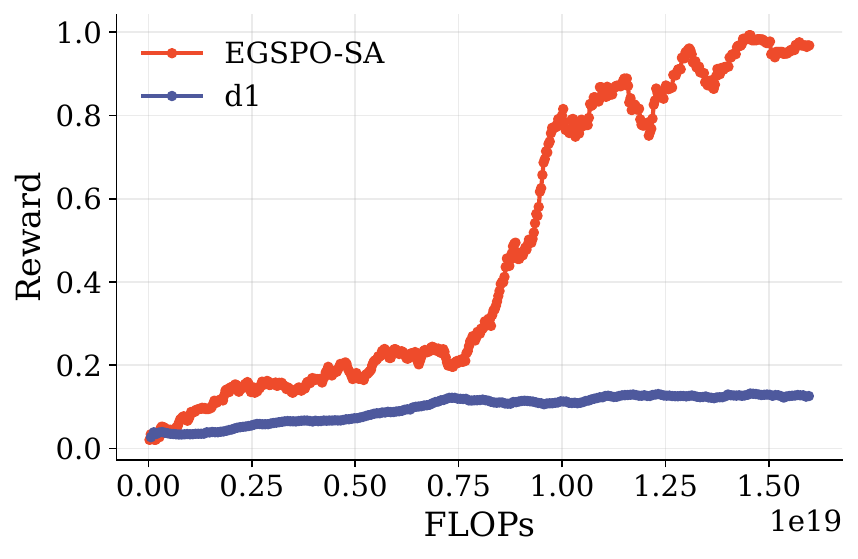}
        \caption{Reward vs.\ FLOPs}
    \end{subfigure}
    \hfill
    \begin{subfigure}[b]{0.32\textwidth}
        \includegraphics[width=\textwidth]{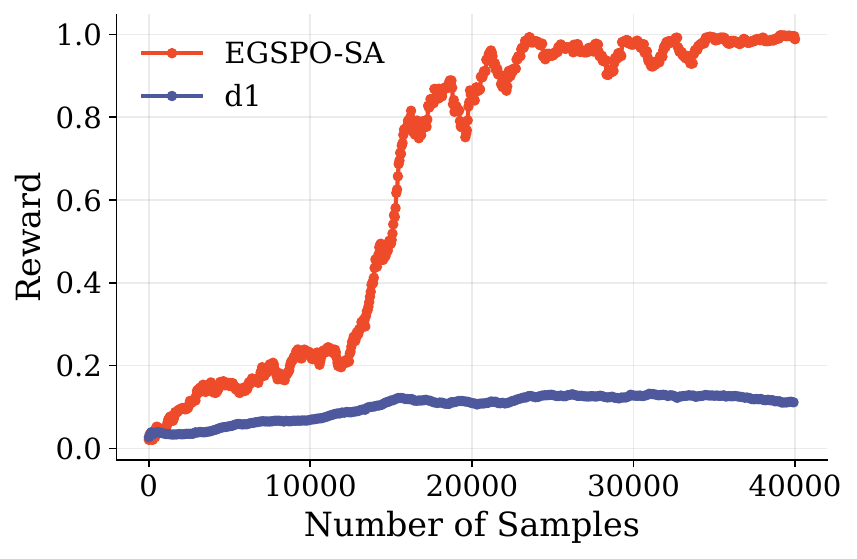}
        \caption{Reward vs.\ Samples}
    \end{subfigure}
    \hfill
    \begin{subfigure}[b]{0.32\textwidth}
        \includegraphics[width=\textwidth]{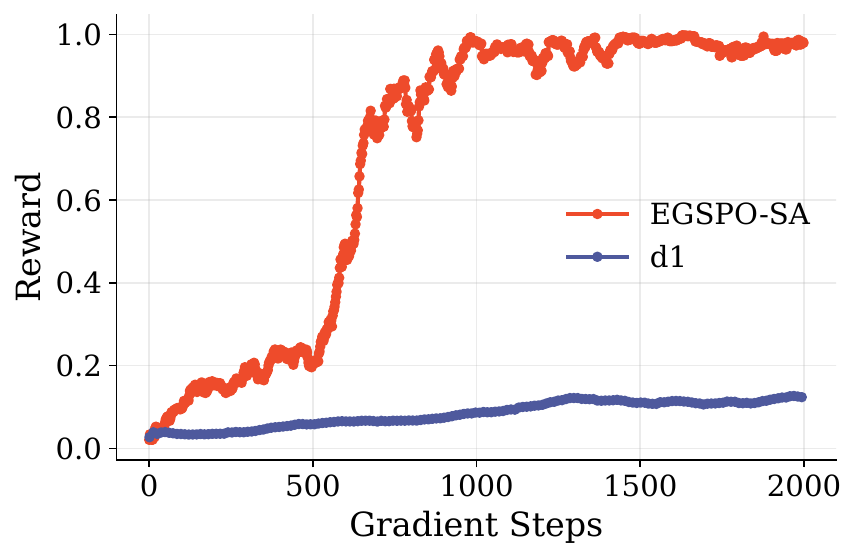}
        \caption{Reward vs.\ Gradient Steps}
    \end{subfigure}
    \vspace{-0.25cm}
    \caption{
        Compute efficiency comparison between EGSPO-SA and d1 on Sudoku.
        \textbf{(a)} FLOPs accumulated over all forward passes across 8 GPUs.
        \textbf{(b)} Cumulative prompt--completion pairs seen during training.
        \textbf{(c)} Optimizer updates (accounting for gradient accumulation).
        EGSPO-SA dominates d1 under all three compute budgets.
    }
    \vspace{-0.5cm}
    \label{fig:compute_efficiency}
\end{figure*}

\begin{figure}[t]
\small 
    \centering
    \hfill
    \begin{subfigure}{0.3\textwidth}
        \centering
        \includegraphics[width=0.95\linewidth]{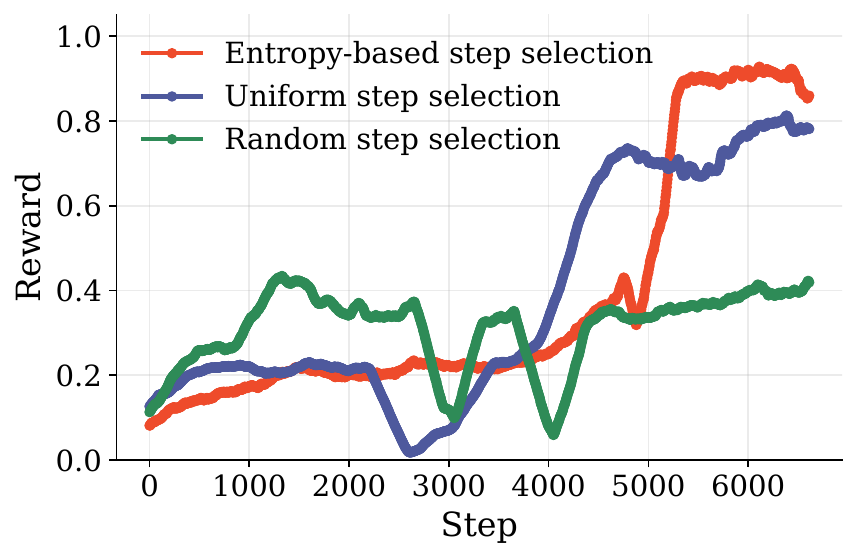}
        \caption{}
        \label{fig:ablation_entropy_selection}
    \end{subfigure}
    \hfill
    \begin{subfigure}{0.3\textwidth}
        \centering
        \includegraphics[width=0.95\linewidth]{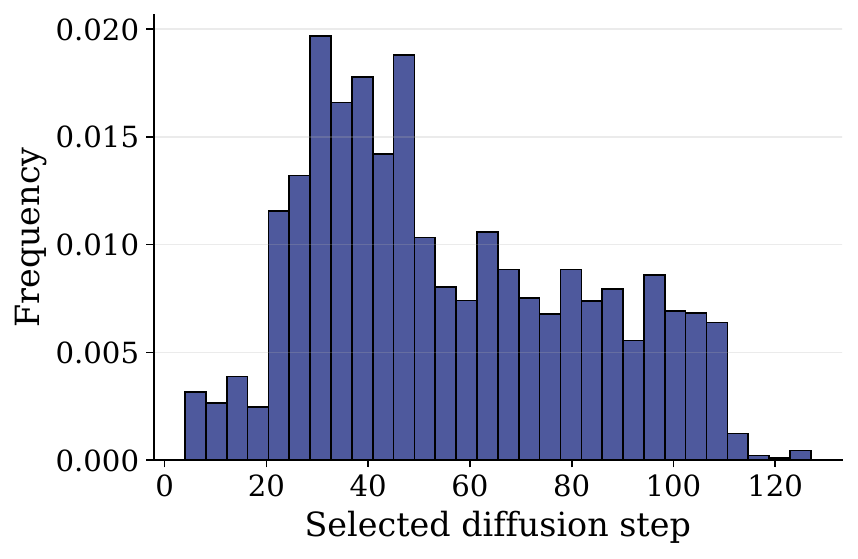}
        \caption{}
        \label{fig:ablation_Hist}
    \end{subfigure}
    \hfill 
    \begin{subfigure}{0.3\textwidth}
        \centering
        \includegraphics[width=\linewidth]{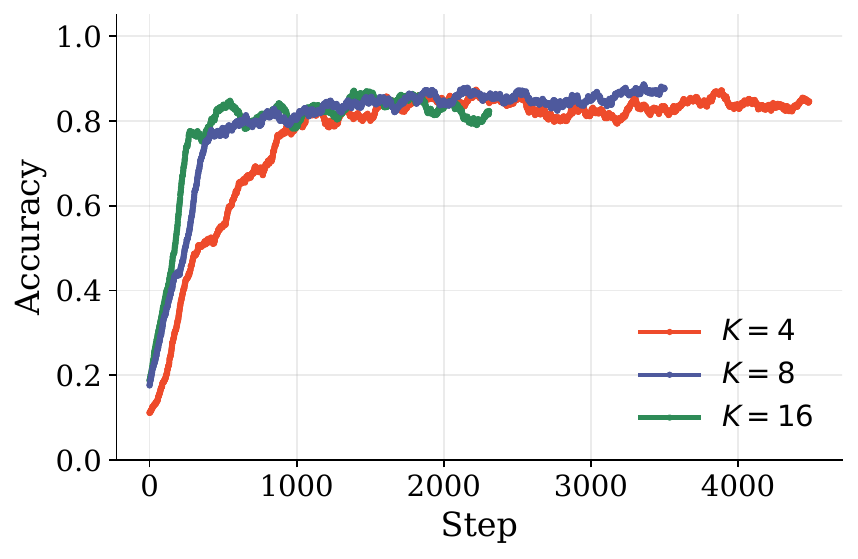}
        \caption{}
        \label{fig:ablation_K}
    \end{subfigure}
    \vspace{-0.2cm}
    \caption{ Ablation studies analyzing (a) entropy-guided step selection (EGSPO) versus uniform and random step selection, (b) the distribution of selected denoising steps, and (c) performance as a function of the step budget $K$.}
    \vspace{-0.4cm}
    \label{fig:ablation_study}
\end{figure}

\vspace{-0.3cm}
\subsection{Ablations and Additional Experiments}

\textbf{Compute Efficiency:} Figure~\ref{fig:compute_efficiency} compares EGSPO-SA and d1 on Sudoku along three compute axes (FLOPs, prompt--completion pairs, and
optimizer updates) using LLaDA-8B-Instruct as the base model. Across
all three, EGSPO-SA converges to near-perfect reward while d1 plateaus
below $0.2$, showing that EGSPO-SA is more efficient in compute, data,
and optimization.



\textbf{Effect of Entropy-Based Step Selection:} Under a fixed budget of $K$ denoising steps per trajectory, we compare
\textit{uniform} (evenly spaced), \textit{random} (randomly
sampled), and \textit{entropy-based} (top-$K$ by average unmasking
token entropy) selection. As shown in Figure~\ref{fig:ablation_entropy_selection},
EGSPO achieves higher final performance and converges more rapidly under
matched compute, as it concentrates updates on high-uncertainty timesteps,
consistent with the analysis in Section~\ref{sec:entropy_step_selection}.
Uniform consistently outperforms random as training progresses, indicating
that covering the trajectory provides more stable updates than stochastic
sampling.

\textbf{Distribution of Selected Denoising Steps:}
Figure~\ref{fig:ablation_Hist} shows the distribution of timesteps
chosen by entropy-based selection. Contrary to the naive expectation that
early, heavily masked steps would dominate, selected steps span the
diffusion horizon with a modest concentration on intermediate timesteps.
In early steps, the model is confident across the unmasked-token choices;
intermediate steps are where partial structure has formed but multiple
completions remain plausible, yielding the highest token-level uncertainty;
late steps are largely deterministic and rarely selected. Entropy-based
selection thus targets timesteps where the model is genuinely uncertain
about token assignments, not simply those that are more masked.


\begin{wraptable}{r}{0.42\textwidth}
\vspace{-0.8em}
\caption{ GSM8K accuracy (EGSPO, LLaDA) for different step budgets $K$ and generation length}
\label{tab:k_ablation}
\centering
\resizebox{\linewidth}{!}{%
\begin{tabular}{lccccc}
\toprule
$K$ & \small{Approx. Grad. Steps} & 128 & 256 & 512 \\
\midrule
4  & 4400 & 76.5 & 83.4 & 83.6 \\
8  & 3600 & \textbf{80.7} & \textbf{86.0} & \textbf{86.1} \\
16 & 2400 & 78.2 & 83.4 & 83.6 \\
\bottomrule
\end{tabular}}
\vspace{-0.5em}
\end{wraptable}

\textbf{Effect of Step Budget $K$:} Under a fixed overall training compute budget, increasing $K$ raises the number of gradient terms per batch but reduces total gradient steps, since batches accumulate more forward passes. This gives approximately $4400$, $3600$, and $2400$ total gradient steps for $K=4,8,16$ respectively. Table~\ref{tab:k_ablation} shows that $K=8$ achieves the best GSM8K accuracy across all generation lengths: too few gradient terms ($K=4$) gives insufficient coverage of the denoising trajectory, while too few total steps ($K=16$) under-optimizes the policy. Entropy-guided selection concentrates the $K$ budget on the most informative steps, making this tradeoff favorable at moderate $K$; training curves are shown in Figure~\ref{fig:ablation_K}.


\textbf{Advantage weight $\lambda$.} We additionally ablate the choice of constant vs.\ step-varying $\lambda$ in Appendix~\ref{app:lambda-ablation}, finding that a $\lambda$ fixed within each episode (no denoising-step dependence) matches a diffusion time-step dependent weight $\lambda_t$, which justifies this simpler within-episode choice. The initial value of $\lambda$ and whether to apply an across-training annealing schedule are selected per task based on the bias--variance considerations in \cref{sec:EstimatingAdvantages}; per-task values are listed in Appendix~\ref{sec:hypsetting}.

\section{Conclusion}
We introduced a principled RL framework for diffusion language models
that makes denoising structure explicit. By formulating masked diffusion
generation as a finite-horizon MDP, we derived the policy gradient
that decomposes along the diffusion trajectory and yields a well-defined
notion of stepwise advantages, without evaluating intractable
sequence-level likelihoods. We proposed entropy-guided step selection
for adaptive compute allocation and lightweight one-step completions for
intermediate advantage estimation, enabling efficient stepwise policy
optimization with theoretical guarantees. Empirical results demonstrate
consistent improvements over existing RL post-training methods for DLMs
across coding, logical, and mathematical reasoning benchmarks.

An interesting future work is to extend our algorithms to self-correction and remasking-capable DLMs, where tokens may be revised after being
decoded. This requires revisiting both the action space and the policy gradient
derivation under non-monotonic trajectories.

\section{Acknowledgments}

Portions of this research were conducted with the advanced computing resources provided by Texas A\&M High Performance Research Computing. This work was supported, in part, by National Science Foundation (NSF) under grants CNS-2148354, FuSe2-2425399, U.S. Army Combat Capabilities Development Command (DEVCOM) under Grant Number W911NF2520046.  This work was also supported in part by the National Science Foundation grant NSF-CNS 2148354, federal agencies, and industry partners as specified in the Resilient \& Intelligent NextG Systems (RINGS) program. Any opinions, findings, and conclusions or recommendations expressed in this material are those of the authors and do not necessarily reflect the views of the sponsoring agencies.

\bibliography{neurips_contents/ref}
\bibliographystyle{plainnat}




\appendix

\newpage
\section{Theoretical Results}

\label{app:theoretical_results}


\textbf{Proposition~\ref{thm:pg_thm}} (Recalled). For a policy $\pitheta$ and the objective $J(\btheta) = \bE_{\vq, \vx \sim \pitheta(\cdot \mid \vq)}[r(\vx_0, \vq)]$, the policy gradient is given by \eqref{eq:pg_adv}.

\begin{proof}
\label{proof:pg_thm}
    Writing $\vx = \vx_{0:T-1}$, we can evaluate directly
    \begin{align*}
        \nabla_\btheta J(\btheta) 
        &= \nabla_\btheta  \bE_\vq \left[ \sum_{\vx} r(\vx_0, \vq) \pitheta(\vx \mid \vq) \right] =\bE_\vq \left[ \sum_\vx  r(\vx_0, \vq) \nabla_\btheta \pitheta(\vx \mid \vq) \right] \\
        &=\bE_\vq \left[ \sum_\vx  r(\vx_0, \vq) \pitheta(\vx \mid \vq) \nabla_\btheta \log  \pitheta(\vx \mid \vq) \right] =\bE_{\vq, \vx \sim \pitheta(\cdot \mid \vq)}\left[ r(\vx_0, \vq) \nabla_\btheta \log \pitheta(\vx \mid \vq) \right].
    \end{align*}
    By definition, since $\pitheta(\vx \mid \vq) = \pitheta(\vx_{0:T-1} \mid \vq) = \prod_{t=0}^{T-1} \pitheta(\vx_t \mid \vx_{t+1}, \vq)$, we have $\nabla_\btheta J(\btheta) = \sum_{t=0}^{T-1} \nabla_\btheta J_t(\btheta)$, where the per step policy gradient is defined as $\nabla_\btheta J_t(\btheta) = \bE_{\vq, \vx\sim \pitheta}[r(\vx_0, \vq) \nabla_\btheta \log \pitheta(\vx_t \mid \vx_{t+1}, \vq)]$. The term inside the expectation is a random variable that depends on $\vx_t, \vx_{t+1}, \vx_0, \vq$. For any action $\vx_t$ independent baseline, in particular, the random variable $V_{t+1}^\pitheta(\vx_{t+1})$, we have
    \begin{align*}
        &~\bE_{\vx_{t} \sim \pitheta(\cdot \mid \vx_{t+1})}[V_{t+1}^\pitheta(\vx_{t+1}) \nabla_\btheta \log \pitheta(\vx_t \mid \vx_{t+1})] = V_{t+1}^\pitheta(\vx_{t+1}) \bE_{\vx_{t} \sim \pitheta(\cdot \mid \vx_{t+1})}[\nabla_\btheta \log \pitheta(\vx_t \mid \vx_{t+1})] \\
        &~=V_{t+1}^\pitheta(\vx_{t+1}) \sum_{\vx_t} \pitheta(\vx_t \mid \vx_{t+1}) \nabla_\btheta \log \pitheta(\vx_t \mid \vx_{t+1}) =V_{t+1}^\pitheta(\vx_{t+1}) \sum_{\vx_t} \nabla_\btheta \pitheta(\vx_t \mid \vx_{t+1}) \\
        &=V_{t+1}^\pitheta(\vx_{t+1}) \nabla_\btheta \left( \sum_{\vx_t}\pitheta(\vx_t \mid \vx_{t+1})\right) = V_{t+1}^\pitheta(\vx_{t+1}) \nabla_\btheta (1) = 0.
    \end{align*}
    Therefore, $A_t^\pitheta = r(\vx_0, \vq) - V_{t+1}^\pitheta(\vx_{t+1}, \vq)$ satisfies $ \nabla_\btheta J_t(\btheta) = \bE_{\vq, \vx\sim\pitheta(\cdot \mid \vq)}[A_t^\pitheta \nabla_\btheta \log \pitheta(\vx_t \mid \vx_{t+1})]$.

\end{proof}


\begin{proposition}[Entropy Bound]
\label{prop:entropy_bound}
    Let $\pitheta$ be a softmax policy with logits $\gtheta^i$ for $i$ over a finite set, i.e., $\pi_{\btheta, i} \triangleq \frac{\exp(g_{\btheta,i})}{\sum_{i'}\exp(g_\btheta^{i'})}$. Let $A_i$ be constants such that $\norm{A}_\infty\triangleq \max_i \abs{A_i}$ is finite. Let $J(\btheta) \triangleq \bE_{i\sim \pitheta}[A_i]$ and suppose $\norm{\nabla_{\btheta} g_\btheta} \triangleq \max_{i,j} \norm{\nabla_\btheta (g_{\btheta,i}-g_{\btheta,j})}$ is bounded. Then, $\nabla_\btheta J(\btheta) = \bE_{i\sim \pitheta}[A_i \nabla_\btheta \log \pi_{\btheta, i}]$ satisfies
    \begin{align}
        \norm{\nabla_\btheta J(\btheta)} \le H(\pitheta) \norm{A}_\infty \norm{\nabla_\btheta g_\btheta}, \label{eq:grad_score_bound} 
    \end{align}
    where $H(\pitheta) \triangleq \bE_{i\sim \pitheta}[-\log \pi_{\btheta, i}]$ is the entropy of $\pitheta$.
\end{proposition}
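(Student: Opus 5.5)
The plan is to bound the score function $\nabla_\btheta \log \pi_{\btheta,i}$ pointwise in terms of $1-\pi_{\btheta,i}$, and then compare the resulting Gini-type quantity $\sum_i \pi_{\btheta,i}(1-\pi_{\btheta,i})$ with the entropy $H(\pitheta)$ using an elementary scalar inequality.

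\textbf{Step 1 (score of a softmax).} From the softmax form, $\log \pi_{\btheta,i} = g_{\btheta,i} - \log\sum_{j}\exp(g_{\btheta,j})$. Differentiating gives
\begin{align*}
\nabla_\btheta \log \pi_{\btheta,i} = \nabla_\btheta g_{\btheta,i} - \sum_j \pi_{\btheta,j}\nabla_\btheta g_{\btheta,j} = \sum_{j} \pi_{\btheta,j}\,\nabla_\btheta\bigl(g_{\btheta,i}-g_{\btheta,j}\bigr),
\end{align*}
using $\sum_j \pi_{\btheta,j}=1$. The $j=i$ term vanishes. Hence, by the triangle inequality and the definition of $\norm{\nabla_\btheta g_\btheta}$,
\begin{align*}
\norm{\nabla_\btheta \log \pi_{\btheta,i}} \le \sum_{j\neq i}\pi_{\btheta,j}\,\norm{\nabla_\btheta g_\btheta} = (1-\pi_{\btheta,i})\norm{\nabla_\btheta g_\btheta}.
\end{align*}
The point of writing the score as differences of logits is that only the pairwise-difference gradient norm enters. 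This is exactly the quantity assumed bounded in the statement.

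\textbf{Step 2 (pull the advantage out).} By the triangle inequality and $\abs{A_i}\le\norm{A}_\infty$,
\begin{align*}
\norm{\nabla_\btheta J(\btheta)} \le \sum_i \pi_{\btheta,i}\abs{A_i}\,\norm{\nabla_\btheta\log\pi_{\btheta,i}} \le \norm{A}_\infty\norm{\nabla_\btheta g_\btheta}\sum_i \pi_{\btheta,i}(1-\pi_{\btheta,i}).
\end{align*}
The identity $\nabla_\btheta J = \bE_{i\sim\pitheta}[A_i\nabla_\btheta\log\pi_{\btheta,i}]$ itself is the standard log-derivative trick, as in the proof of Proposition~\ref{thm:pg_thm}.

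\textbf{Step 3 (compare with entropy).} For $p\in(0,1]$ we have $1-p \le -\log p$, by concavity of $\log$ (equivalently, $\log p \le p-1$). Softmax probabilities are strictly positive, so this applies termwise and gives
\begin{align*}
\sum_i \pi_{\btheta,i}(1-\pi_{\btheta,i}) \le \sum_i \pi_{\btheta,i}(-\log\pi_{\btheta,i}) = H(\pitheta),
\end{align*}
which yields \eqref{eq:grad_score_bound}.

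The only step requiring care is Step 1. The naive bound $\norm{\nabla g_i - \bE_j\nabla g_j}\le 2\max_i\norm{\nabla g_i}$ loses the factor $(1-\pi_{\btheta,i})$, and that factor is what produces the entropy; the rewriting as a convex combination of logit differences recovers it.

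Lemma~\ref{lem: error_bound_using_entropy} should then follow by applying this bound to each omitted step $t\notin S$. One conditions on $\vx_{t+1}$ and treats $A_t$ as a function of the action, bounded by $\max_t\abs{A_t}$. One then uses Jensen/the triangle inequality to pass the norm inside the outer expectation, and sums the per-step bounds over $t\notin S$.
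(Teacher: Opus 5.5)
Your proof is correct and follows essentially the same route as the paper: you rewrite the softmax score as $\sum_{j\neq i}\pi_{\btheta,j}\nabla_\btheta(g_{\btheta,i}-g_{\btheta,j})$, bound its norm by $(1-\pi_{\btheta,i})\norm{\nabla_\btheta g_\btheta}$, pull out $\norm{A}_\infty$, and invoke $1-p\le -\log p$. The only cosmetic difference is the order: the paper applies the scalar inequality pointwise to the score bound before averaging, whereas you average first to obtain $\sum_i \pi_{\btheta,i}(1-\pi_{\btheta,i})$ and then compare it with $H(\pitheta)$.
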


\begin{proof} Starting with the gradient of the objective $J$, we have $\norm{\nabla_\btheta J(\btheta)} \le \bE_{i\sim \pitheta}[\abs{A_i} \norm{\nabla_\btheta \log \pi_{\btheta, i}}] \le \norm{A}_\infty \bE_{i\sim \pitheta}[ \norm{\nabla_\btheta \log \pi_{\btheta, i}}]$. Now, for any $i$, using chain-rule and the derivative of softmax policy, we have $\nabla_\btheta \log \pi_{\btheta, i}  = (1-\pi_{\btheta, i}) \nabla_\btheta g_{\btheta,i} -\sum_{j\neq i} \pi_{\btheta, j}\nabla_\btheta g_{\btheta,j} = \sum_{j\neq i} \pi_{\btheta, j}\left(\nabla_\btheta g_{\btheta,i} - \nabla_\btheta g_{\btheta,j}\right)$. Therefore, by triangle inequality, we obtain
\begin{align}
    \norm{\nabla_\btheta \log \pi_{\btheta, i}} \le \sum_{j\neq i} \pi_{\btheta, j}\norm{\nabla_\btheta(g_{\btheta,i}-g_{\btheta,j})} &\le \left(\sum_{j\neq i} \pitheta^j\right) \left(\max_{i,j} \norm{\nabla_\btheta(g_{\btheta,i}-g_{\btheta,j}))} \right)  \\
    &~ = (1-\pi_{\btheta, i}) \norm{\nabla_\btheta g_\btheta} \le \left(-\log \pi_{\btheta, i} \right) \norm{\nabla_\btheta g_\btheta}, \label{eq:score_bound}
\end{align}
where the last step follows from $1 - x \le -\log x$ for $x\in (0, 1)$. Using the above bound in the first inequality gives the result.
\end{proof}

\begin{proof}[Proof of Lemma~\ref{lem: error_bound_using_entropy}]
\label{proof: error_bound_using_entropy}
By Proposition \ref{prop:entropy_bound}, we obtain the following bound on the error in terms of the per-step entropy:
\begin{align*}
\Delta_S &= \norm{\nabla_\btheta J(\btheta) - \nabla_\btheta J_S(\btheta)}
    = \norm{\sum_{t \notin S} \nabla_\btheta J_t(\btheta)} \le \sum_{t \notin S} \norm{\nabla_\btheta J_t(\btheta)} \\
    &\le \sum_{t \notin S} H(\pitheta^{t\mid t+1}) \norm{A_t^{\pitheta}}_\infty \norm{\nabla_\btheta g_{\btheta}^{t\mid t+1}},
\end{align*}
where $g_\btheta$ are the logits from which the policy $\pitheta$ is constructed by taking softmax. Bounding $\norm{A_t^{\pi_\btheta}}_\infty \le \max_{t'} |A_{t'}^{\pi_\btheta}|$ and $\norm{\nabla_\btheta g_\btheta^{t\mid t+1}} \le B$ uniformly over $t$, we obtain $\Delta_S \le B\!\left(\max_t |A_t^{\pi_\btheta}|\right)\!\sum_{t \notin S} H(\pitheta^{t\mid t+1})$.
\end{proof}

\section{Datasets and Reward Functions}\label{sec:dataset&reward}

We largely follow the experimental protocol of d1 \citep{zhao2025d1}, adopting the same reward formulations and train--test splits to ensure comparability.

\subsection{GSM8K.}
We use the training split of the GSM8K dataset for reinforcement learning and evaluate on the official test split. Rewards follow the Unsloth-style formulation and consist of five additive components:

\begin{itemize}
    \item an XML structure reward that assigns $+0.125$ for each correctly placed formatting tag, with small penalties for extraneous content appearing after closing tags;
    \item a soft format reward of $+0.5$ for outputs matching the pattern
    \texttt{<reasoning>...</reasoning><answer>...</answer>}
    \item a strict format reward of $+0.5$ for exact adherence to the expected structure, including correct line breaks;
    \item an integer-answer reward of $+0.5$ if the predicted answer is a valid integer;
    \item a correctness reward of $+2.0$ when the predicted answer matches the ground-truth solution.
    \end{itemize}

\subsection{MATH500.}
For MATH500, we train on the training split and evaluate on the test split. The reward function consists of formatting and correctness components:

\begin{itemize}
    \item a format reward of $1.0$ if \texttt{<answer></answer>} tags are present and the answer is enclosed in a \texttt{\textbackslash boxed} expression;
    \item a format reward of $0.75$ if \texttt{<answer></answer>} tags are present without a boxed expression;
    \item a format reward of $0.5$ if a boxed expression is present without answer tags;
    \item a format reward of $0.25$ if neither answer tags nor a boxed expression are present;
    \item a correctness reward of $+2.0$ when the boxed answer exactly matches the ground-truth solution.
\end{itemize}

\subsection{Sudoku.}
For the $4\times4$ Sudoku task, we use a publicly available dataset of one million synthetically generated puzzles.\footnote{\url{https://github.com/Black-Phoenix/4x4-Sudoku-Dataset}} The dataset was generated using the Arel solver. For evaluation, we randomly sample 256 puzzles using the same generator. The reward is computed as the fraction of correctly filled cells among positions that were initially empty in the input puzzle, focusing evaluation on reasoning performance rather than copying pre-filled values.

\subsection{Coding.}
For coding experiments, we train on the KodCode-Light-RL-10k dataset. The reward function consists of three components. First, an XML structure reward identical to GSM8K is used, with an additional $+0.5$ bonus when the generated program is enclosed within answer tags; outputs not wrapped in \texttt{python} code blocks receive zero structural reward. Second, a correctness score is computed using unit tests, where the reward corresponds to the fraction of tests passed rather than a binary success signal. Finally, a safety constraint assigns a reward of $0$ if the generated code imports restricted modules, including \texttt{os}, \texttt{sys}, \texttt{shutil}, \texttt{subprocess}, \texttt{socket}, \texttt{psutil}, \texttt{ctypes}, \texttt{pathlib}, \texttt{builtins}, or \texttt{import}.

\begin{algorithm}[tbp]
\caption{Entropy-Guided Stepwise Policy Optimization with Stepwise Advantages (EGSPO-SA)}
\label{alg:epsa}
\begin{algorithmic}[1]
\REQUIRE Policy $\pitheta$, reference policy $\pi_{\text{ref}}$, reward $r$,
prompt distribution $\mathcal{D}$, denoising steps $T$,
step budget $K$, completions $G$, advantage weight $\lambda$,
KL weight $\beta$
\STATE Initialize $\btheta \gets \btheta_{\text{ref}}$

\WHILE{not converged}
    \STATE Sample prompt $\vq \sim \mathcal{D}$
    \STATE Sample $G$ diffusion trajectories
    $\vx^{j}_{0:T} \sim \pithetaold(\cdot \mid \vq)$ for $j=1, \dots, G$
    \STATE Compute $\pi_{\rm ref}(\vx_t^j \mid \vx_{t+1}^j)$ for $j=1, \dots, G, t = 0, \dots, T-1$
    \STATE Compute per-step entropies
    $H_t^j$ of $\pithetaold(\cdot \mid \vx_{t+1}^j)$
    \STATE Select step subset
    $S^j \gets \text{top}[K](t: H_t^j)$

    \STATE Greedily complete trajectory from $\vx^{j}_{t+1}$:
    $\hat{\vx}^{j}_{0\mid t+1} \gets \text{GreedyComplete}(\vx^{j}_{t+1})$ (as per \cref{eq:greedy_completion})

    \STATE Estimate per-step coefficients per completion:
    $A^{j}_{t} \gets r(\vx^{(j)}_0, \vq) +\lambda(r(\vx^{(j)}_0, \vq) 
    - r(\hat{\vx}^{j}_{0\mid t+1}, \vq))$.

    \STATE Compute centered stepwise advantages: $\bar{A}_t^{j} \gets A^{j}_{t}
    - \frac{1}{G}\sum_{i=1}^G A^{i}_{t}$

    \STATE Compute importance ratios
    $\rho_t^{j} \gets
    \pitheta(\vx^{j}_t \mid \vx^{j}_{t+1}) /
          {\pithetaold (\vx^{j}_t \mid x^{j}_{t+1})}$

    \STATE Estimate KL term using $\pitheta$ and $\pi_{\rm ref}$ likelihoods using trajectories from $\pithetaold$

    \STATE Update $\pitheta$ using GRPO with stepwise advantages $\bar{A}_t^{j}$ and KL regularization weighted by $\beta$
    
\ENDWHILE

\STATE Return $\pitheta$
\end{algorithmic}
\end{algorithm}

\section{Details on Baselines}
\label{sec:baselines}

For LLaDA-8B-Instruct experiments, our method along with d1, wd1, and
SPG fine-tune LLaDA-8B-Instruct at a fixed generation length of 256
with two tokens decoded in parallel per denoising step, using LoRA
with the same rank. The remaining baselines deviate from this protocol
as detailed below.

\begin{itemize}
    \item \textbf{d2} trains and evaluates with task-specific generation
    lengths (e.g., 128 for Sudoku and Countdown, 512 for MATH500), so
    its reported numbers are not produced under a uniform decoding
    configuration across tasks.

    \item \textbf{ESPO} uses single-token decoding on GSM8K, MATH500,
    and coding tasks, and two-token decoding on Sudoku and Countdown,
    for both training and evaluation. It additionally uses full
    fine-tuning rather than LoRA on coding tasks, and reports a 1-shot
    result on Sudoku. To place ESPO on equal footing with the rest of
    our table, we take the publicly released ESPO checkpoints and
    re-evaluate them under our uniform protocol (two tokens decoded per
    step, and 0-shot evaluation) on all reasoning and coding benchmarks.

    \item \textbf{SPG} reports a 3-shot result on Sudoku, while all
    other entries in our table, including ours, are 0-shot. SPG's
    3-shot number is retained as published, since the authors did not
    release their checkpoints.
\end{itemize}

\section{Hyperparameter Settings and Implementation Details} \label{sec:hypsetting}

We follow prior diffusion-RL work for most hyperparameter choices.
Low-Rank Adaptation (LoRA) is applied with rank $r = 128$ and scaling
factor $\alpha = 64$. Training is conducted on 8 NVIDIA H200 GPUs.
Optimization is performed using Adam with $\beta_1 = 0.9$, $\beta_2 = 0.99$,
weight decay of $0.1$, and a learning rate of $3 \times 10^{-5}$.
Task-specific settings for batch size, number of completions per prompt,
step budget, gradient accumulation, and the advantage weight $\lambda$
are listed in Tables~\ref{tab:hyp_llada} and~\ref{tab:hyp_dream}.

For RL rollout, we use a generation length of $256$ tokens with $128$
diffusion steps during training. We adopt block-wise generation with a
block size of $32$, and two tokens are denoised in parallel at each step
for all tasks. For each prompt, we generate $G$ completions, and for each
completion we select the top $K$ denoising steps with the highest entropy
for policy updates. We use $K = 8$ for all tasks except coding, where $K = 16$: coding tasks have longer effective horizons (more tokens must be placed correctly in syntactically valid positions), so broader trajectory coverage from a larger $K$ outweighs the reduction in total gradient steps under the fixed compute budget. We use a sampling temperature of $0.9$ during training
for all tasks except Countdown, where a temperature of $0.2$ is used. At
inference time, we apply greedy decoding for all tasks.

At evaluation time, we report results using generation lengths of $128$,
$256$, and $512$ tokens. Models are evaluated once the reward curves
stabilize: coding tasks are trained for $4$k steps, mathematical reasoning
tasks for around $6$k steps, and logical reasoning tasks for $10$k steps.
For all tasks, we select the checkpoint with the highest average
performance across the evaluated generation lengths.

The advantage weight $\lambda$ varies by task and backbone, reflecting
the bias--variance tradeoff.
For Sudoku and GSM8K on LLaDA, we use a constant $\lambda$. For Countdown, MATH500, and most Dream tasks, we
find that initializing $\lambda$ at the values listed in
Tables~\ref{tab:hyp_llada}--\ref{tab:hyp_dream} and halving it every $500$
training steps further improves performance, analogous to standard
hyperparameter scheduling in deep RL (e.g., KL coefficient annealing).

\begin{table}[h]
\centering
\caption{Task-specific hyperparameters for \textbf{LLaDA-8B-Instruct}. $B$: batch size (prompt-completion pairs per update), $G$: completions per prompt, $K$: step budget.}
\label{tab:hyp_llada}
\small
\begin{tabular}{lccccc}
\toprule
\textbf{Task} & $B$ & $G$ & $K$ & Grad.\ Accum. & $\lambda$ \\
\midrule
Sudoku    & 48 & 8 & 8 & 2 & 1 (constant) \\
Countdown & 96 & 8 & 8 & 2 & 1 (halved every 500 steps) \\
GSM8K     & 48 & 8 & 8 & 8 & 0.2 (constant) \\
MATH500   &  96  &  8 &  8 &  8 & 1 (halved every 500 steps) \\
Coding    &  96  & 8  & 16  &  4 & 1 (halved every 500 steps) \\
\bottomrule
\end{tabular}
\end{table}

\begin{table}[h]
\centering
\caption{Task-specific hyperparameters for \textbf{Dream-7B-Instruct}. $B$: batch size (prompt-completion pairs per update), $G$: completions per prompt, $K$: step budget.}
\label{tab:hyp_dream}
\small
\begin{tabular}{lccccc}
\toprule
\textbf{Task} & $B$ & $G$ & $K$ & Grad.\ Accum. & $\lambda$ \\
\midrule
Sudoku    &   192 &  16  & 8  & 2  &  1 (constant)\\
Countdown &   192 &  16 &  8 &  2 & 0.2 (halved every 500 steps) \\
GSM8K     &   192 &  16 & 8  &  2 & 0.5 (halved every 500 steps) \\
MATH500   &   192 &  16 &  8 &  2 & 0.5 (halved every 500 steps) \\
\bottomrule
\end{tabular}
\end{table}
\section{Training Dynamics}
\label{app:training_dynamics}
Figure~\ref{fig:training_curves} shows the training reward curves for
EGSPO and EGSPO-SA across four LLaDA tasks. Convergence behavior is
similar on the mathematical reasoning benchmarks (GSM8K and MATH500),
where both methods reach comparable final reward at similar rates.
On the logical reasoning benchmarks (Sudoku and Countdown), EGSPO-SA
exhibits earlier learning onset and reaches a higher asymptotic reward,
with the gap most pronounced on Sudoku. This pattern is consistent with
the main results in Section~\ref{sec:main_results}.

\begin{figure*}[t]
    \centering
    \begin{subfigure}{0.24\linewidth}
        \centering
        \includegraphics[width=\linewidth]{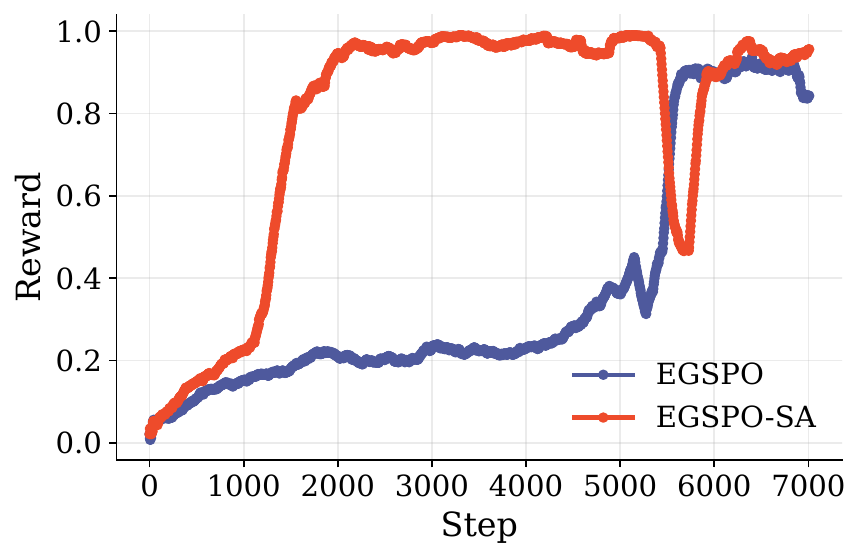}
        \caption{Sudoku}
        \label{fig:train_sudoku}
    \end{subfigure}
    \hfill
    \begin{subfigure}{0.24\linewidth}
        \centering
        \includegraphics[width=\linewidth]{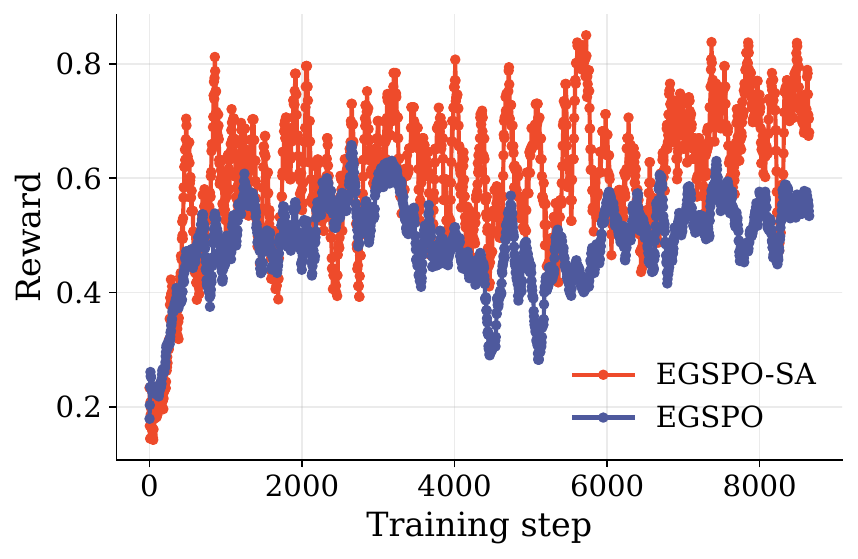}
        \caption{Countdown}
        \label{fig:train_countdown}
    \end{subfigure}
    \hfill
    \begin{subfigure}{0.24\linewidth}
        \centering
        \includegraphics[width=\linewidth]{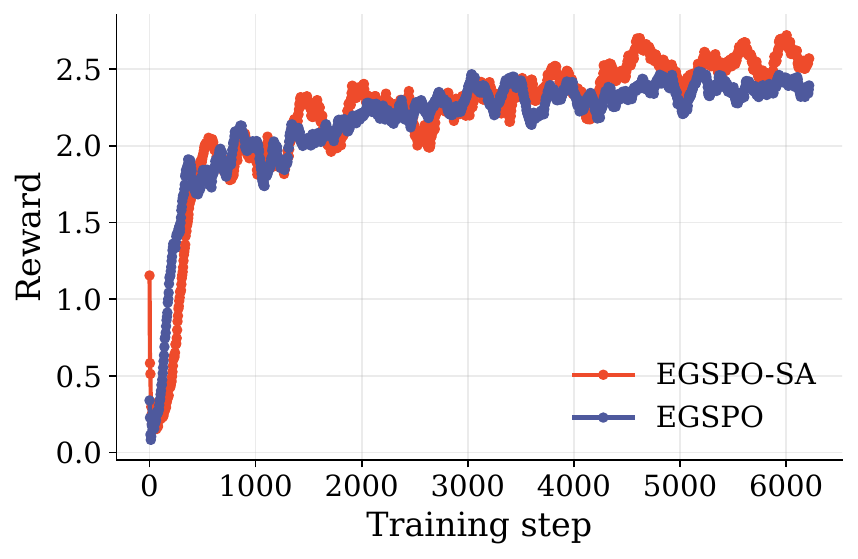}
        \caption{GSM8K}
        \label{fig:train_gsm8k}
    \end{subfigure}
    \hfill
    \begin{subfigure}{0.24\linewidth}
        \centering
        \includegraphics[width=\linewidth]{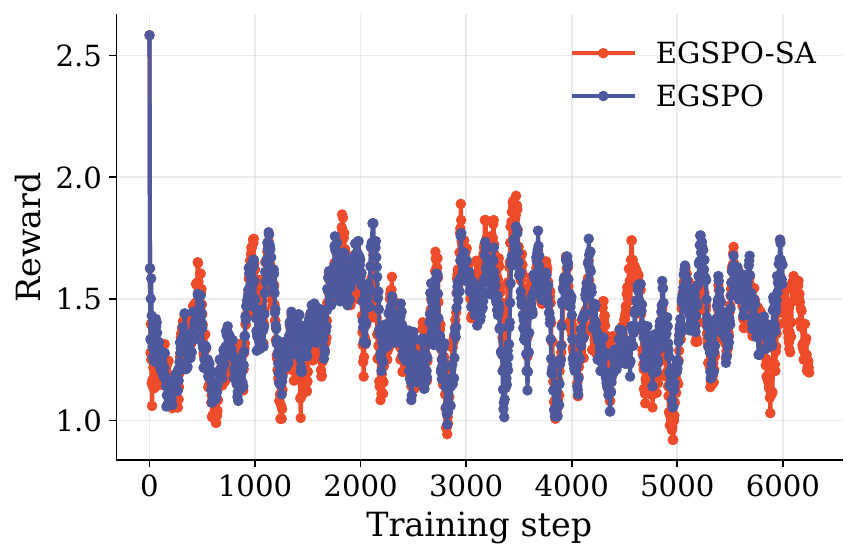}
        \caption{MATH500}
        \label{fig:train_math}
    \end{subfigure}
    \vspace{-0.2cm}
    \caption{Training curves for EGSPO and EGSPO-SA on Sudoku, Countdown, GSM8K, and MATH500 using the training settings described in Section~\ref{sec:hypsetting}.}
    \vspace{-0.3cm}
    \label{fig:training_curves}
\end{figure*}

\section{Ablation on the Advantage Weight $\lambda$}
\label{app:lambda-ablation}

The main paper uses a constant $\lambda$ within a gradient step while training. One might wonder whether a \emph{diffusion time-step} dependent weight $\lambda_t$ would help, motivated by the intuition that the one-step value approximation $\hat{V}_{t+1}^{\pi_\theta}$ is a better proxy for $V_{t+1}^{\pi_\theta}$ when $t$ is small (most tokens already unmasked) and a coarser one when $t$ is large (many tokens still masked). A natural schedule reflecting this is $\lambda_t = 1 - t/T$, which equals $0$ at the start of generation ($t = T$) and $1$ at the end ($t = 0$).

We ran an ablation on Sudoku comparing three settings: constant $\lambda = 1$, constant $\lambda = 0.5$, and the step-varying schedule $\lambda_t = 1 - t/T$. As shown in Figure~\ref{fig:lambda-ablation}, the constant $\lambda = 1$ and the step-varying schedule perform similarly throughout training and converge to comparable final accuracy. By contrast, $\lambda = 0.5$ converges more slowly and to a lower asymptote, indicating that the magnitude of $\lambda$ matters more than its step-dependence. Fixing $\lambda$ within each episode (no denoising-step dependence) is therefore the practical choice, and we did not find evidence that step-varying schedules $\lambda_t$ provide a consistent benefit. The initial value of $\lambda$ and whether to apply an across-training annealing schedule are treated as task-level hyperparameters, as detailed in Table~\ref{tab:hyp_llada}.

\begin{figure}[t]
    \centering
    \includegraphics[width=0.6\linewidth]{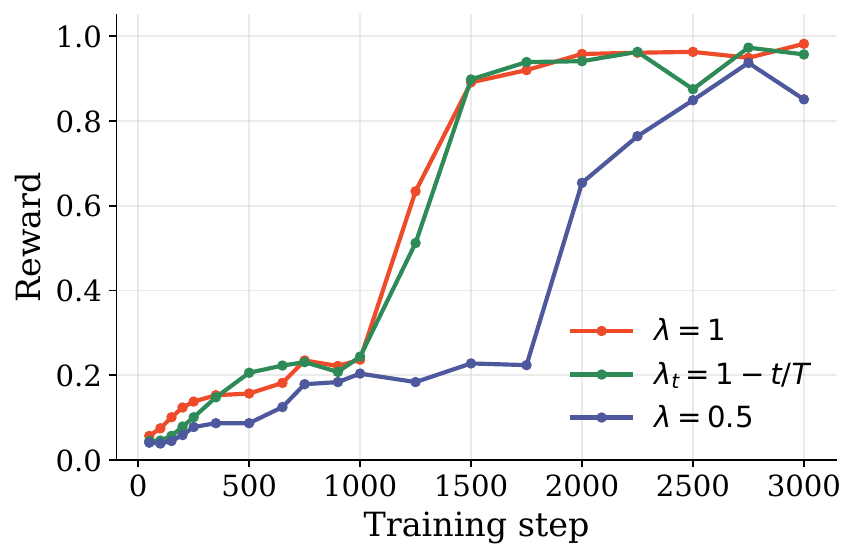}
    \caption{Ablation on the advantage weight $\lambda$ on Sudoku. Constant $\lambda = 1$ and the step-varying schedule $\lambda_t = 1 - t/T$ converge to similar accuracy; constant $\lambda = 0.5$ is noticeably slower. The magnitude of $\lambda$ matters more than its functional form across denoising steps.}
    \label{fig:lambda-ablation}
\end{figure}

\section{Algorithms}
Algorithm~\ref{alg:epsa} summarizes the proposed entropy-guided reinforcement learning procedure for diffusion language models. EGSPO-SA operates directly on the denoising process by treating each denoising step as an action and optimizing the policy using a GRPO-style objective.

The method applies entropy-based step selection to identify a subset of informative denoising timesteps for policy updates, avoiding unnecessary optimization at low-entropy, near-deterministic steps. In addition, EGSPO-SA incorporates stepwise advantages. For each selected denoising step, the partial trajectory is greedily completed to estimate the remaining reward, which is used as a baseline for credit assignment. This yields denser and more precise training signals along the denoising trajectory.

EGSPO corresponds to a special case of EGSPO-SA obtained by setting the advantage weight $\lambda = 0$, in which a single sequence-level reward is broadcast uniformly across the selected denoising steps.

\section{Qualitative Examples}
\label{app:Qualitative}

We provide qualitative examples comparing EGSPO and EGSPO-SA with d1 on representative tasks to illustrate differences in reasoning behavior and outputs.

\begin{tcolorbox}[
    colback=icmlblue10,
    colframe=icmlblue70,
    boxrule=0.4pt,
    arc=1.5pt,
    left=4pt,
    right=4pt,
    top=4pt,
    bottom=4pt,
    title={Math Example},
    fonttitle=\bfseries\footnotesize,
    fontupper=\scriptsize
]
\textbf{Problem.} Suppose that I have $6$ different books, $2$ of which are math books. In how many ways can I stack my $6$ books on a shelf if I do not want the math books to be next to each other?

\medskip
\textbf{Ground Truth.}
\[
\texttt{480}
\]
\medskip
\textbf{Model Generations.}
\begin{itemize}
    \item \textbf{EGSPO-SA (Correct)}
    \begin{quote}\ttfamily
    1. The total number of ways to arrange 6 books on a shelf is \(6! = 720\).
    
2. Treat the 2 math books as a single unit. This reduces the problem to arranging 5 units (the math book unit and the other 4 individual books).

3. The number of ways to arrange 5 units is \(5! = 120\).

4. Within the math book unit, the 2 math books can be arranged in \(2! = 2\) ways.

5. Therefore, the number of valid arrangements where the math books are next to each other is \(5! \times 2! = 120 \times 2 = 240\).

6. Subtract the number of arrangements where the math books are not next to each other from the total number of arrangements: \(720 - 240 = 480\).

So, the number of ways to stack the 6 books on a shelf such that the math books are not next to each other is \(\boxed{480}\).
    \end{quote}
    \item \textbf{EGSPO (Correct)}
    \begin{quote}\ttfamily
    First, we calculate the total number of ways to arrange the 6 books without any restrictions. This is given by the factorial of 6, which is \(6! = 720\).

Next, we calculate the number of arrangements in which the two math books are next to each other. We treat the two math books as a single unit. This reduces the problem to arranging 5 units (the block of two math books and the other 4 books). The number of ways to arrange these 5 units is \(5! = 120\).

Within the block, the two math books can be arranged in \(2! = 2\) ways. Therefore, the total number of arrangements where the two math books are next to each other is \(120 \times 2 = 240\).

To find the number of arrangements where the two math books are not next to each other, we subtract the number of arrangements where they are next to each other from the total number of arrangements:
\[
720 - 240 = 480
\]
    \end{quote}
    \item \textbf{d1 (Incorrect)}
    \begin{quote}\ttfamily
    To solve this problem, we need to consider the number of ways to arrange the 6 books such that the two math books are not next to each other. We can treat the pair of math books as a single unitblock" to simplify the problem. This means we are now arranging 5 "blocks" (the pair of math books and the other 4 books).

The number of ways to arrange these5 blocks is \(5!\) (5 factorial). However, within the pair of math books, there are 2 ways to arrange the two math books (either math first and science second, or science first and math second).

Therefore, the total number of ways to arrange the 6 books such that the math books are not next to each other is \(5! \times 2\).

Calculating this, we get:
\[
5! = 5 \times 4 \times 3 \times 2 \times 1 = 120
\]
 
 the total number of arrangements is:
\[
120 \times 2 = 240
\]
    \end{quote}
\end{itemize}
\end{tcolorbox}

\begin{tcolorbox}[
    colback=icmlblue10,
    colframe=icmlblue70,
    boxrule=0.4pt,
    arc=1.5pt,
    left=4pt,
    right=4pt,
    top=4pt,
    bottom=4pt,
    title={Sudoku Example},
    fonttitle=\bfseries\footnotesize,
    fontupper=\scriptsize
]
\textbf{Problem.} Solve the following $4\times4$ Sudoku puzzle:
\[
\texttt{0000304014022304}
\]
\medskip
\textbf{Ground Truth.}
\[
\texttt{4123324114322314}
\]
\medskip
\textbf{Model Generations.}
\begin{itemize}
    \item \textbf{EGSPO-SA (Correct)}
    \begin{quote}\ttfamily
    To solve this Sudoku puzzle, we need to fill the empty cells with the digits 1-4 such that each row, each, and each 2x2 box contains all the digits exactly once. We can start by filling the first row with the digits 3 and 4. Then, we can fill the second row with the digits 1 and 4. Next, we can fill the remaining rows and columns with the remaining digits. </reasoning> <answer> 4123324114322314 </answer>
    \end{quote}
    \item \textbf{EGSPO (Correct)}
    \begin{quote}\ttfamily
    To solve the 4x4 Sudoku puzzle, we need to fill in the empty cells with the digits 1-4 such that each row, column, and 2x2 box contains the digits 1-4 exactly once. We can start by analyzing the given puzzle and filling in the empty cells according to the rules. We need to ensure that each row, column, and 2x2 box contains the digits 1-4 exactly once. </reasoning> <answer> 4123324114322314 </answer>
    \end{quote}
    \item \textbf{d1 (Incorrect)}
    \begin{quote}\ttfamily
    To solve this 4x4 Sudoku puzzle, we need to fill in the empty cells with digits 1-4 such that each row, column, and 2x2 box contains all digits exactly once. We can start by placing the known digits and then filling in the empty cells guided by the rules of Sudoku. We will use a systematic approach to fill in the grid, ensuring that each row, column, and box conforms to the Sudoku rules. </reasoning> <answer> 1234314214322314 </answer>
    \end{quote}
\end{itemize}
\end{tcolorbox}

\begin{tcolorbox}[
    colback=icmlblue10,
    colframe=icmlblue70,
    boxrule=0.4pt,
    arc=1.5pt,
    left=4pt,
    right=4pt,
    top=4pt,
    bottom=4pt,
    title={GSM8k Example},
    fonttitle=\bfseries\footnotesize,
    fontupper=\scriptsize
]
\textbf{Problem.} Imma has 3 cats. She feeds her cats twice a day with 60 grams of cat food. How many days will 720 grams of cat food last?

\textbf{Ground Truth.}
\[
\texttt{2.0}
\]
\medskip

\medskip
\textbf{Model Generations.}
\begin{itemize}

    \item \textbf{EGSPO-SA (Correct)}
    \begin{quote}\ttfamily
    
    First, we need to determine the total amount of cat food consumed by all three cats in one day. 
    Each cat eats 60 grams of cat food per feed, and and she feeds each cat twice a day. 
    Therefore, each cat eats 60 grams * 2 feeds = 120 grams of cat food per day. 
    
    Since there are 3 cats, the total amount of cat food consumed by all cats per day is:
    120 grams perat * 3 cats = 360 grams of cat food per day.
    
    Now, we need to find out how many days 720 grams of cat food will last. 
    Therefore, we divide the total amount of cat food (720 grams) by the amount consumed in one day (360 grams).
    
    Therefore, the number of days is:
    720 grams / 360 grams/day = 2 days.
    
    Therefore, 720 grams of cat food will last for 2 days.

<answer>
\boxed{2}
</answer>

    \end{quote}

    \item \textbf{EGSPO (Correct)}
    \begin{quote}\ttfamily
    Imma has 3 cats, and she feeds each cat twice a day with 60 grams of cat food. Therefore, each cat consumes 2 * 60 grams = 120 grams of cat food per day. For 3 cats, the total food consumed per day is 3 * 120 grams = 360 grams. To find out how long a supply of 720 grams of cat food will last, we divide the total amount of food by the daily consumption: 720 grams / 360 grams = 2 days.

<answer>
\boxed{2}
</answer>

    \end{quote}

    \item \textbf{d1 (Incorrect)}
    \begin{quote}\ttfamily
    To determine how many days 720 grams of cat food will last, we need to first calculate the total amount of cat food consumed per day by allma's cats. Each cat is fed twice a day with 60 grams of food food, so, each cat consumes 60 grams per 2 meals = 30 grams per day. Since Imma has 3 cats, the total daily consumption is 30 grams per cat * 3 cats = 90 grams per day. Now, we need to find out how many days 720 grams will last by dividing the total amount of cat food by the daily consumption: 720 grams / 90 grams per day = 8 days.

<answer>
\boxed{8}
</answer>

    \end{quote}
\end{itemize}
\end{tcolorbox}

\begin{tcolorbox}[
    colback=icmlblue10,
    colframe=icmlblue70,
    boxrule=0.4pt,
    arc=1.5pt,
    left=4pt,
    right=4pt,
    top=4pt,
    bottom=4pt,
    title={Countdown Example},
    fonttitle=\bfseries\footnotesize,
    fontupper=\scriptsize
]
\textbf{Problem.} Numbers: [45, 72, 25]

Target: 52

\medskip

\medskip
\textbf{Model Generations.}
\begin{itemize}
    \item \textbf{EGSPO-SA (Correct)}
    \begin{quote}\ttfamily

    To create an arithmetic expression that evaluates to exactly 52 using the numbers 45, 72, and 25, we need to consider the operations +, -, *, and /. Let's start by checking the following:

1. The sum of 45 and 72 is 117, which is greater than 52.

2. The sum of 45 and 25 is 67, which is less than 52.

3. The sum of 72 and 25 is 97, which is greater than 52.

4. The product of 45 and 72 is greater than 52.

5. The product of 72 and 25 is 1800, which is greater than 52.

6. The difference of 72 and 45 is 27, which is less than 52.

7. The difference of 72 and 25 is 47.

<answer>
\boxed{72 - 45 + 25}
</answer>
    
    \end{quote}
    \item \textbf{EGSPO (Correct)}
    \begin{quote}\ttfamily
    
To create an arithmetic expression that evaluates to 52 using the numbers 45, 72, and 25, we need to consider the properties of the operations +, -, *, and /. Let's break down the possible combinations:

1. **Addition**: 45 + 72 = 117
2. **Subtraction**: 117 - 25 = 92
3. **Multiplication**: 72 * 25 = 1800
4. **Division**: 1800 / 45 = 40

We need to find a combination that results in 52. First, let's try subtraction and subtraction:

- 72 - 45 = 27

- 27 - 25 = 2

None of these combinations us 52. Let's try subtraction and addition:

- 72 - 45 = 27

- 27 + 25 = 52

</reasoning>

<answer>
\boxed{72 - 45 + 25}
</answer>
    
    \end{quote}
    \item \textbf{d1 (Incorrect)}
    \begin{quote}\ttfamily
    To find an arithmetic expression that evaluates to 52 using the numbers 45, 72, and 25, we need to consider the operations +, -, *, and / and ensure each number is used exactly once. Let's start by trying different combinations:

1. **, we can try adding:**

   - 45 + 72 = 117
   
   - 117 - 25 = 92
   
   - 92 - 45 = 47
   
   - 47 + 25 = 72
   
   - 72 - 45 = 27
   
   - 27 + 25 = 52

2. **, we can try subtracting:**

   - 45 - 72 = -27
   
   - -27 - 25 = -52

3. **, we can try multiplying:**

   - 45 * 72 = 3240
   
   - 3240 / 25 = 52

<answer>
\boxed{3240/25}
    
    \end{quote}
\end{itemize}
\end{tcolorbox}

\section{Limitation}
\label{sec:limitation}
While our framework provides a principled approach to RL for diffusion
language models, it has several limitations worth noting.

\textbf{Verifiable rewards only.} All experiments use tasks with
verifiable rewards (correctness on math, logical reasoning, and code),
where the reward signal is well-defined and bounded by a known constant.
Extending to settings with learned, unbounded rewards, preference-based
feedback, or non-verifiable rewards such as formal proofs would require
additional care, as the boundedness assumption underlying our entropy
bound would no longer hold tightly.

\textbf{Scope of diffusion paradigm.} Our MDP formulation and policy
gradient theorem are derived for masked diffusion language models with
monotonic unmasking, where tokens are not revised once decoded. Extending
to remasking-capable masked diffusion~\citep{wang2025remaskingdiscretediffusionmodels} or to
non-masked discrete diffusion paradigms~\citep{lou2023discrete} would
require revisiting the action space and transition dynamics of the MDP.
We leave a general formulation that unifies these settings to future work.

\section{LLM Usage}
\label{app:llm_usage}
Large language models were used solely as an editorial aid to improve clarity and presentation. No scientific content, including methods, algorithms, formulas, experimental design, or results, was generated or suggested by LLMs. All technical contributions and conclusions are the original work of the authors.

\end{document}